\documentclass{article}

\usepackage{PRIMEarxiv}

\usepackage[utf8]{inputenc} 
\usepackage[T1]{fontenc}    
\usepackage{hyperref}       
\usepackage{url}            
\usepackage{booktabs}       
\usepackage{amsfonts}       
\usepackage{nicefrac}       
\usepackage{microtype}      
\usepackage{lipsum}
\usepackage{fancyhdr}       
\usepackage{graphicx}       
\graphicspath{{media/}}     
\usepackage{xcolor}         
\usepackage{mathtools}

\usepackage{times}  
\usepackage{helvet}  
\usepackage{courier}  
\urlstyle{rm} 
\usepackage[round]{natbib}  
\usepackage{caption} 
\DeclareCaptionStyle{ruled}{labelfont=normalfont,labelsep=colon,strut=off} 
\frenchspacing  
\setlength{\pdfpagewidth}{8.5in}  
\setlength{\pdfpageheight}{11in}  
%
\usepackage{algorithm}
\usepackage{algorithmic}
\usepackage{latexsym,amsmath,amssymb,bm}
\usepackage[font=footnotesize, skip=5pt]{caption}
\usepackage{times}
\usepackage{amsthm}
\usepackage[retainorgcmds]{IEEEtrantools}
\usepackage{mdwlist}
\usepackage{ctable}
\usepackage{enumitem}
\usepackage{listings}
\usepackage{thmtools,thm-restate}
\usepackage{multirow}
\usepackage{multicol}

\newtheorem{theorem}{Theorem}
\newtheorem{defn}{Definition}

\newtheorem{lma}{Lemma}

\newtheorem{cor}{Corollary}

\DeclareMathOperator*{\argmin}{arg\,min}
\DeclareSymbolFont{bbold}{U}{bbold}{m}{n}
\DeclareSymbolFontAlphabet{\mathbbold}{bbold}
\DeclarePairedDelimiter{\ceil}{\lceil}{\rceil}

\pagestyle{fancy}
\thispagestyle{empty}
\rhead{ \textit{ }} 


\title{Sharp Analysis of Random Fourier Features in Classification
}

\author{
  Zhu Li \\
  Gatsby Computational Neuroscience Unit,  \\
  University College London, \\
  United Kingdom\\
  \texttt{zhu.li@ucl.ac.uk}
}

\begin{document}
\maketitle

\begin{abstract}
We study the theoretical properties of random Fourier features classification with Lipschitz continuous loss functions such as support vector machine and logistic regression. Utilizing the regularity condition, we show for the first time that random Fourier features classification can achieve $O(1/\sqrt{n})$ learning rate with only $\Omega(\sqrt{n} \log n)$ features, as opposed to $\Omega(n)$ features suggested by previous results. Our study covers the standard feature sampling method for which we reduce the number of features required, as well as a problem-dependent sampling method which further reduces the number of features while still keeping the optimal generalization property. Moreover, we prove that the random Fourier features classification can obtain a fast $O(1/n)$ learning rate for both sampling schemes under Massart's low noise assumption. Our results demonstrate the potential effectiveness of random Fourier features approximation in reducing the computational complexity (roughly from $O(n^3)$ in time and $O(n^2)$ in space to $O(n^2)$ and $O(n\sqrt{n})$ respectively) without having to trade-off the statistical prediction accuracy. In addition, the achieved trade-off in our analysis is at least the same as the optimal results in the literature under the worst case scenario and significantly improves the optimal results under benign regularity conditions.
\end{abstract}

\keywords{Kernel Methods \and Low Rank Approximation \and Lipschitz Continuous Loss}

\section{Introduction}
Kernel methods have been widely used in many machine learning tasks such as regression and classification \citep{Scholkopf01,scholkopf2004kernel}, as they provide a simple framework to model highly complicated functional relationships and well-established theoretical guarantees \citep{caponnetto2007optimal,steinwart2008support}. The power of kernel methods comes from the so-called "kernel trick", where it utilizes a feature function $\phi(\cdot)$ to implicitly map the data into a high or possibly infinite dimensional feature space and thus allows non-linear functional learning. However, kernel methods are notorious for being time-consuming, since a typical kernel learning algorithm requires $O(n^3)$ computation and $O(n^2)$ memory, where $n$ is the number of training samples. Due to the prohibitive computational requirements, a flurry of research has been devoted to developing algorithms that efficiently approximate kernel functions \citep{smola2000sparse,williams2001using,rahimi2007random,mahoney2009cur,alaoui2015fast,rudi2017falkon,zhang2015divide}.

Among many approximation frameworks, the random Fourier features (RFFs) method proposed by \citet{rahimi2007random} has received great attention recently~\citep[see][for a comprehensive review]{liu2020random}. The key idea of RFFs is to approximate the infinite dimensional feature map $\phi(\cdot)$ with an explicit $s$-dimensional random feature map $\phi_s(\cdot)$ through Bochner's theorem \citep{Bochner32,rudin2017fourier}, which states that $\phi_s(\cdot)$ can be constructed through sampling from some spectral measure. Kernel methods are now reduced to linear learning in the feature space, which can be computed via fast linear solver \citep{shalev2011pegasos}. The computational cost decreases from roughly $O(n^3)$ in time and $O(n^2)$ in space to $O(ns^2)$ and $O(ns)$ respectively. As a result, significant computational savings can be achieved as long as $s \ll n$.

Despite their empirical success \citep{rahimi2007random,huang2014kernel,dai2014scalable}, theoretical understanding of the RFFs is incomplete. In particular, the question of how to choose $s$ in order to obtain the RFFs estimators with performance provably comparable to original kernel methods remains unclear. To this end, several authors study the properties of the RFFs to approximate the kernel function and the kernel Gram-matrix~\citep[see e.g.,][and references therein]{rahimi2007random,sriperumbudur2015optimal,sutherland2015error}. However, all of these works require $s = \Omega(n)$ features to guarantee no loss of prediction accuracy, which translates to no computational savings at all. A highly refined analysis in the context of ridge regression is proposed recently~\citep[see e.g.,][]{rudi2017generalization,avron2017random,li2021towards}. When the spectral measure is used for sampling, they first show that $O(\sqrt{n}\log n)$ features are adequate to guarantee the minimax optimal learning rate $O(1/\sqrt{n})$, the same learning rate obtained with full kernel ridge regression. Furthermore, they prove that the RFFs regression can obtain a fast learning rate at the expense of increasing the number of features. Finally, they demonstrate that using a problem-dependent sampling distribution can significantly reduce the number of features to $s = \Omega(1)$. 

A question motivating our study is whether similar theoretical results hold in the classification setting where a key difference is the loss functions employed (Lipschitz continuous loss such as support vector machine and logistic regression). \citet{bach2017equivalence}, \citet{sun2018but}, and~\citet{li2021towards} study the generalization properties of RFFs approximations in the classification setting. They show that RFFs estimators can provide computational gains while still preserving the statistical properties of the original kernel method. Nevertheless, a key requirement in these analyses is to employ a certain problem-dependent sampling distribution. Computing such a distribution often requires $O(n^3)$ in time and $O(n^2)$ in space already and hence is itself intractable. Therefore, whether RFFs classification can provide computational savings without using the problem-dependent sampling distribution remains unclear, and a detailed trade-off between the number of features required and the statistical prediction accuracy is still missing.

A key step in obtaining a better trade-off for RFFs regression is to employ the regularity condition (see Assumption A.$3$). However, this property is not used while analyzing the RFFs classification. In this paper, by incorporating the regularity condition, we improve the optimal results in the literature and provide a definitive answer to questions mentioned above by making the following contributions
\begin{itemize}
    \item Under suitable regularity condition (Assumption A.$3$), Theorem~\ref{theo:minx} shows that RFFs classification only requires $\Omega(\sqrt{n}\log n)$ features to guarantee the minimax optimal $O(1/\sqrt{n})$ learning rate, the same prediction accuracy as the original kernel classification methods. Our analysis allows the computational cost to reduce from $O(n^3)$ in time and $O(n^2)$ in space to $O(n^2)$ and $O(n\sqrt{n})$ respectively, and suggest that for a wide range of classification problems, RFFs approximations provide dramatic computational cost savings without loss of prediction accuracy. To the best of our knowledge, this is the first result confirming that such a computational gain is possible in the classification setting when the standard sampling method is used.
    
    \item Using Massart's low noise assumption (Assumption A.$4$), Theorem~\ref{theo:fast_rate} further provides a more refined analysis on the generalization properties of the RFF classification estimators. We obtain a sharp $O(1/n)$ learning rate for classification at the expense of more random features required.

    \item We also discuss how problem-dependent sampling distribution further reduces the computational cost in the $O(1/\sqrt{n})$ rate setting and the $O(1/n)$ rate setting. Our analysis expresses the trade-off between the number of features required and the statistical prediction accuracy in terms of the regularization parameter ($\lambda$) and the \emph{effective degree of freedom} ($d(\lambda)$) and points out how utilizing the optimized feature can lead to a significant reduction in the computational cost.
    
    \item Finally, in Table~\ref{tab:wor_lip_pla} and~\ref{tab:ref_lip_wei}, we provide a comprehensive comparison between achieved results in this paper and the optimal bound in the literature. The analysis demonstrates that under benign conditions, our study obtains the sharpest bound on the number of features required in literature, while under worst case scenario, we match the optimal results in the literature.
\end{itemize}

\section{Background}
\subsection{Supervised Learning with Kernels} \label{sec:supervised_learning}
Let $P(x,y)=P_xP(y \mid x)$ be a joint probability density function defined on  $\mathcal{X}\times \mathcal{Y}$ where $\mathcal{X}$ is an instance space and $\mathcal{Y}$ a label space. While in regression tasks $\mathcal{Y} \subset \mathbb{R}$, in classification tasks it is typically the case that $\mathcal{Y}=\{-1, 1\}$. Let $\{(x_i,y_i)\}_{i=1}^n$  be a training set sampled independently from $P(x,y)$. The goal of a supervised learning defined with a kernel function $k$ (and the associated reproducing kernel Hilbert space $\mathcal{H}$) is to find a hypothesis $f \colon \mathcal{X} \rightarrow \mathcal{Y}$ such that $f \in \mathcal{H}$ and $f(x)$ is a good estimate of the label $y \in \mathcal{Y}$ corresponding to a previously unseen instance $x \in \mathcal{X}$. In particular, the learning can be formulated as the following optimization problem
\vspace{-.5em}
\begin{IEEEeqnarray}{rCl}
	\hat{f}^{\lambda}  &\coloneqq& \argmin_{f \in\mathcal{H}}\  \frac{1}{n}\sum_{i=1}^n l(y_i,f(x_i))+ \lambda \|f\|_{\mathcal{H}}^2\ . \nonumber 
\end{IEEEeqnarray}
where $l:\mathcal{Y}\times\mathcal{Y}\rightarrow \mathbb{R}_{+}$ is a loss function and $\lambda$ is the regularization parameter to prevent overfitting. As a result of the representer theorem~\citep{Scholkopf01}, an empirical risk minimization estimator in this setting can be expressed as $\hat{f}^{\lambda} =\sum_{i=1}^n \alpha_ik(x_i,\cdot)$ with $\alpha \in \mathbb{R}^n$ and the optimization problem can be reformulated as

\begin{IEEEeqnarray}{rCl}
\hat{\alpha}_k^{\lambda}  &\coloneqq& \argmin_{\alpha \in \mathbb{R}^n} \ \frac{1}{n}\sum_{i=1}^n l(y_i,(\mathbf{K}\alpha)_i) + \lambda \alpha^T \mathbf{K}\alpha \ , \label{eq:krl_opm}
\end{IEEEeqnarray}

where $\mathbf{K}$ is the kernel Gram-matrix with $\mathbf{K}_{i,j} = k(x_i,x_j)$.

\paragraph{Learning Risk} The hypothesis $\hat{f}^{\lambda}$ is an empirical estimator and we use the learning risk to assess its ability to capture the relationship between instances and labels given by $P$~\citep{caponnetto2007optimal}
\[\mathbb{E}_{P}[l_{\hat{f}^{\lambda}}] = \int_{\mathcal{X}\times \mathcal{Y}} l(y,\hat{f}^{\lambda}(x))dP(x,y)\ ,\]
where we use $l_{f}$ to denote $l(y,f(x))$. When the context is clear, we will omit $P$ from the expectation and write $\mathbb{E}[l_{\hat{f}^{\lambda}}]$.

The empirical distribution $P_n(x,y)$ is given by a sample of $n$ examples drawn independently from $P(x,y)$. The empirical risk is used to estimate the learning risk $\mathbb{E}[l_{\hat{f}^{\lambda}}]$ and it is given by \[\vspace{-.1em} \mathbb{E}_n[l_{\hat{f}^{\lambda}}] = \frac{1}{n} \sum_{i=1}^n l(y_i,\hat{f}^{\lambda}(x_i))\ .\] Similar to \citet{rudi2017generalization} and \citet{caponnetto2007optimal}, we will assume~\footnote{The existence of $f_{\mathcal{H}}$ depends on the complexity of $\mathcal{H}$ which is related to the data distribution $P(y|x)$. For more details, please see \citet{caponnetto2007optimal} and \citet{rudi2017generalization}.} the existence of $f_\mathcal{H} \in \mathcal{H}$ such that $f_{\mathcal{H}} = \argmin_{f \in \mathcal{H}}\ \mathbb{E}[l_f]$. 
Note that $\mathbb{E}[l_{f_{\mathcal{H}}}]$ is the lowest learning risk one can achieve in the reproducing kernel Hilbert space $\mathcal{H}$. Hence, theoretical studies of the estimator $\hat{f}^{\lambda}$ often concern how fast its learning risk $\mathbb{E}[l_{\hat{f}^{\lambda}}]$ converges to $\mathbb{E}[l_{f_{\mathcal{H}}}]$, that is, how fast the excess risk $\mathbb{E}[l_{\hat{f}^{\lambda}}] - \mathbb{E}[l_{f_{\mathcal{H}}}]$ converges to zero. In the remainder of the manuscript, we will refer to the rate at which the excess risk converges to zero as the \emph{learning rate}.

\subsection{Random Fourier Features}
Despite providing a flexible non-linear approximation framework, kernel methods suffer from the scalability issue. In particular, kernel supervised learning often requires the store or the inverse of the kernel Gram matrix $\mathbf{K}$ ($O(n^2)$ and $O(n^3)$ computations respectively), which is prohibitive. As a result, many low-rank approximation algorithms have been designed to resolve this issue~\citep[see, e.g.,][and references therein]{smola2000sparse,Williams01,rahimi2007random,rahimi2009weighted,mahoney2009cur}.

Among them, RFFs method is a widely used, simple, and effective technique for scaling up kernel methods. The idea is due to Bochner's theorem~\citep{Bochner32}, which states that any bounded, continuous, and shift-invariant kernel is the Fourier transform of
a bounded positive measure. Assuming the spectral measure $d\tau$ has a density function $p(\cdot)$, we can write the corresponding kernel as  
\begin{IEEEeqnarray}{rCl}
k(x,y) &=&  \int_{\mathcal{V}}e^{-2\pi iv^{T}(x-y)}d\tau(v) \nonumber \\
&=& \int_{\mathcal{V}} \big(e^{-2\pi i v^{T}x}\big)\big(e^{-2\pi i v^{T}y}\big)^{*}p(v)dv \ , \nonumber
\end{IEEEeqnarray}
where $c^{*}$ denotes the complex conjugate of $c \in \mathbb{C}$. Typically, the kernel is real valued and we can ignore the imaginary part~\citep[see e.g.,][]{rahimi2007random}. \citet{bach2017equivalence} and \citet{rudi2017generalization} further generalize the idea by considering the following decomposition of kernel functions
\begin{IEEEeqnarray}{rCl}
k(x,y) = \int_{\mathcal{V}}\psi(v,x)\psi(v,y)p(v)dv \ ,\label{krl_dec}
\end{IEEEeqnarray}
where $\psi \colon \mathcal{V}\times \mathcal{X}\rightarrow \mathbb{R}$ is a continuous and bounded function with respect to $v$ and $x$. Hence, we can approximate the kernel function using its Monte-Carlo estimate
\begin{IEEEeqnarray}{rCl}
\tilde{k}(x,y) &= & \frac{1}{s}\sum_{i=1}^s\psi(v_i,x)\psi(v_i,y)\ ,\nonumber\\
& =& \phi_s(x)^T\phi_s(y) \ .\label{krl_appx}
\end{IEEEeqnarray}
where $\{v_i\}_{i=1}^s$ are sampled independently from the spectral measure $p(v)$ and \vspace{-.5em} \[\phi_s(x) = \frac{1}{\sqrt{s}}[\psi(v_1,x),\dots,\psi(v_s,x)]^{\top}.\vspace{-.5em}\] We denote the reproducing kernel Hilbert space spanned by $\tilde{k}$ as $\tilde{\mathcal{H}}$ (note that in general $\tilde{\mathcal{H}} \nsubseteq \mathcal{H}$). Let $\tilde{\mathbf{K}}$ be Gram-matrices with entries $\tilde{\mathbf{K}}_{ij} = \tilde{k}(x_i,x_j)$. Then the following equalities can be derived easily from Eq.~(\ref{krl_appx})
\begin{align*}
\begin{aligned}
k(x,y) = \mathbb{E}_{v \sim p}\big[\tilde{k}(x,y) \big]\quad \wedge \quad \mathbf{K} = \mathbb{E}_{v\sim p}[\tilde{\mathbf{K}}] \ .
\end{aligned}
\end{align*}

In addition to the kernel Gram-matrix approximation, \citet{bach2017equivalence} establishes that any $f \in \mathcal{H}$ can be expressed as~\footnote{It is not necessarily true that for any $g \in L_2(d\tau)$, there exists a corresponding $f \in \mathcal{H}$.}
\begin{IEEEeqnarray}{rCl}
f(x) = \int_{\mathcal{V}}g(v)\psi(v,x)p(v)dv \qquad (\forall x \in \mathcal{X}) \label{fun_appx}
\end{IEEEeqnarray}
where $g \in L_2(d\tau)$ is a real-valued function such that $\|g\|_{L_2(d\tau)}^2 < \infty$ and $\|f\|_{\mathcal{H}}= \min_{g} \|g\|_{L_2(d\tau)}$, with the minimum taken over all possible decompositions of $f$. Thus, one can take an independent sample $\{v_i\}_{i=1}^s \sim p(v)$ (we refer to this sampling scheme as \emph{plain RFF}) and approximate a function $f \in \mathcal{H}$ by an element from $\tilde{\mathcal{H}}$ as
\begin{align*}
   \tilde{f}(\cdot) = \sum_{i=1}^s\alpha_i \psi(v_i,\cdot) = \phi_{s}(\cdot)^{\top}\alpha \quad \text{with} \quad \alpha \in \mathbb{R}^s\ .
\end{align*}

As the latter approximation is simply a Monte Carlo estimate, one could also select an importance weighted probability density function $q(\cdot)$ and sample features $\{v_i\}_{i=1}^s$ from $q$ (we refer to this sampling scheme as \emph{weighted RFF}).
The function $f$ can then be approximated by
\begin{align*}
  \tilde{f}_{q}(\cdot) = \sum_{i=1}^s\alpha_i \psi_q(v_i,\cdot) = \phi_{q,s}(\cdot)^{\top}\alpha \ ,
\end{align*}
with $\psi_q(v_i,\cdot)= \sqrt{p(v_i)/q(v_i)}\psi(v_i,\cdot)$ and $\phi_{q,s}(\cdot) = (1/\sqrt{s})[\psi_q(v_1,\cdot),\cdots,\psi_q(v_s,\cdot)]^{\top}$.

For both plain RFF and weighted RFF, the goal is to find $\tilde{f}$ with minimal norm such that the computation error between $\tilde{f}$ and $f$ is minimized. Similar to \citet{bach2017equivalence}, the RFFs sampling can be formulated as the following optimization problem
\begin{IEEEeqnarray}{rCl}
\|\tilde{f} - f\|_{L_2(P_x)}^2 + \lambda \|\tilde{f}\|_{\tilde{\mathcal{H}}}^2 \label{fun_opt} \ .
\end{IEEEeqnarray}
Note that since $\tilde{f} \notin \mathcal{H}$ in general, we use the $L_2(P_x)$ norm to measure the computation error.

\subsection{Integral Operator \& Leverage Score Sampling}
Kernel methods and RFFs are often studied through the integral operator $L:L_2(P_x) \rightarrow L_2(P_x)$, which we define below
\begin{IEEEeqnarray}{rCl}
(Lf)(\cdot) = \int_{\mathcal{X}}k(x,\cdot)f(x)dP_x(x)\ . \nonumber
\end{IEEEeqnarray}
Given the kernel decomposition as Eq.~(\ref{krl_dec}), the integral operator can be expressed as an expectation \citep{bach2017equivalence}
\begin{IEEEeqnarray}{rCl}
Lf &=& \int_{\mathcal{X}}k(x,\cdot)f(x)dP_x(x)\ , \nonumber\\
&=& \left(\int_{\mathcal{V}}\psi(v,\cdot)\otimes\psi(v,\cdot)p(v)dv\right)f \label{eq:integral_operator} \ ,
\end{IEEEeqnarray}
where $f\otimes g$ is the $L_2(P_x)$ outer product operator such that $\left(f\otimes g\right)h = \langle g, h \rangle_{L_2(P_x)}f$. Finally, if $k$ and $\psi$ are both bounded and continuous, then $L$ is positive definite, self-adjoint and trace-class. In particular, if $\|\psi\| \leq \kappa$, we have $\|L\| \leq \kappa^2$.

Similarly, for kernel $\tilde{k}$, we define the integral operator $L_s:L_2(P_x) \rightarrow L_2(P_x)$:
\begin{IEEEeqnarray}{rCl}
L_sf &=& \int_{\mathcal{X}}\tilde{k}(x,\cdot)f(x)dP_x(x)\ , \nonumber\\
&=& \int_{\mathcal{X}}\frac{1}{s}\sum_{i=1}^s\psi(v_i,\cdot)\psi(v_i,x)f(x)dP_x \ ,\nonumber \\
&=& \left(\frac{1}{s}\sum_{i=1}^s\psi(v_i,\cdot)\otimes\psi(v_i,\cdot)\right)f \label{eq:rff_integral_operator} \ .
\end{IEEEeqnarray}
Hence, $L_s$ can be seen as an empirical estimator of $L$.

The study of the integral operator is important because it provides information on how to select the optimal sampling distribution $q(v)$. A large body of literature shows that finding an optimal sampling distribution $q(v)$ often significantly reduces the number of features required \citep{bach2017equivalence,alaoui2015fast,avron2017random,rudi2017generalization}. The reason is that random features sampled according to $p(v)$ often focus on approximating the leading eigenvalues of the integral operator $L$. In contrast, a reweighted sampling distribution $q(v)$ allows the random features to span the whole eigenspectrum of $L$.

In light of this, a leverage score based weighted distribution function is first introduced in \citet{alaoui2015fast} in the context of the Nystr{\"o}m approximation~\citep{Nystrom30,Smola00,Williams01}. Utilizing the importance reweighted nature,~\citet{alaoui2015fast} establish a sharp convergence rate of the low-rank estimator based on the Nystr\"om method.

The success of the leverage score distribution further motivates the pursuit of a similar notion for RFFs. In particular,~\citet{bach2017equivalence} first proposes the leverage score sampling based on a leverage score function defined below
\begin{IEEEeqnarray}{rCl}
\tau_{\lambda}(v) = p(v)\langle \psi(v,\cdot), (L+\lambda I)^{-1}\psi(v,\cdot)\rangle_{L_2(P_x)} \ . \label{eq:lev_fun}
\end{IEEEeqnarray}
From our assumption, it follows that there exists a constant $\kappa$ such that $| \psi(v,x) | \leq \kappa$ (for all $v$ and $x$). We now have
\begin{align*}
\begin{aligned}
\tau_{\lambda}(v)\leq p(v)\frac{\kappa^2}{\lambda} \ .
\end{aligned}
\end{align*}
An important property of function $\tau_{\lambda}(v)$ is its relation to the effective number of parameters:
\begin{align*}
\begin{aligned}
\int_{\mathcal{V}}\tau_{\lambda}(v)dv = \text{Tr}\big[L(L+\lambda I)^{-1} \big]:= d(\lambda) \ ,
\end{aligned}
\end{align*}
where $d(\lambda)$ implicitly determines the number of parameters in a supervised learning problem and is thus called the \emph{number of effective degrees of freedom}~\citep{bach2013sharp,hastie2017generalized}.

We can now sample features according to $q^*(v) = \tau_{\lambda}(v)/d(\lambda)$, since $q^*(v)$ is a probability density function.~\citet{bach2017equivalence} studies the property of $q^*(v)$ and demonstrates that sampling according to $q^*(v)$ requires fewer Fourier features compared to the standard spectral measure sampling. From now on, we refer to $q^*(v)$ as the \emph{ridge leverage score distribution} and refer to this sampling strategy as \emph{leverage weighted RFF}.

\section{Main Results}
In this section, we provide our theoretical analysis on the trade-off between the number of random features and the statistical prediction accuracy. We first discuss the worst case scenario where the estimator achieves the $O(1/\sqrt{n})$ learning rate, followed by demonstrating the trade-off in the fast convergence rate setting.

\subsection{$O(1/\sqrt{n})$ Learning Rate}
We study the scenario where the RFFs estimator obtains the minimax learning rate $O(1/\sqrt{n})$. As discussed before, kernel supervised learning can be formulated as Eq.~(\ref{eq:krl_opm}). Since we are investigating the classification setting, we mainly consider the loss function $l$ to be uniformly Lipschitz continuous functions such as support vector machine and logistic regression. A fatal problem for kernel supervised learning is the computational cost since kernel learning problem such as Eq.~(\ref{eq:krl_opm}) often requires the store of the kernel Gram matrix $\mathbf{K}$ or even the inversion of $\mathbf{K}$, which are $O(n^2)$ and $O(n^3)$ computations respectively.

In order to overcome the computation issue, the RFFs provide an efficient way to approximate the kernel function. Specifically, we sample $v_1,\dots,v_s$ according to some importance sampling distribution $q(v)$ to form the random feature vector $\phi_s(\cdot)$. For a given data $(x,y)$, we then approximate the label $y$ with the random feature hypothesis $\tilde{f}_q(x) = \phi_{q,s}(x)^{\top}\beta$. The RFFs learning can be cast as the following optimization problem
\begin{IEEEeqnarray}{rCl}
	\tilde{f}^{\lambda}  &\coloneqq& \argmin_{\tilde{f}_q \in \tilde{\mathcal{H}}}\  \frac{1}{n}\sum_{i=1}^n l(y_i,\tilde{f}_q(x_i))+ \lambda \|\tilde{f}\|_{\tilde{\mathcal{H}}}^2\ . \nonumber 
\end{IEEEeqnarray}
According to \citet{bach2017equivalence} and \citet{li2019towards}, we have $\|\tilde{f}\|_{\tilde{\mathcal{H}}}^2 \leq \|\beta\|_2^2$, as a result, the above optimization can be reformulated as 
\begin{IEEEeqnarray}{rCl}
	\tilde{\beta}^{\lambda}  &\coloneqq& \argmin_{\beta \in \mathbb{R}^s}\  \frac{1}{n}\sum_{i=1}^n l(y_i,\phi_{q,s}^{\top}\beta)+ \lambda \|\beta\|_2^2\ . \label{eq:rff_cla_opt}
\end{IEEEeqnarray}
The RFFs hypothesis with loss function $l$ can be represented as $\tilde{f}^{\lambda} = \phi_{q,s}^{\top}\tilde{\beta}^{\lambda}$. Through the RFFs approximation, we now only need to store the feature matrix $\Phi_q = [\phi_{q,s}(x_1),\dots,\phi_{q,s}(x_n)]^{\top} \in \mathbb{R}^{n\times s}$. The inversion of $\mathbf{K}$ can be approximated as inverting $\Phi_q^{\top}\Phi_q \in \mathbb{R}^{s\times s}$. Hence the computation cost is now $O(ns)$ and $O(ns^2+s^3)$ respectively. We can see that if $s \ll n$, RFFs method enjoys a huge computational savings. However, a key question is how the choice of $s$ affects the prediction accuracy of $\tilde{f}^{\lambda}$.

In this section, we try to address the above issue. We first list our assumptions below
\begin{itemize}[leftmargin = 6.9mm]
    \item[A.$1$] We assume that the kernel has integral expansion as Eq.~(\ref{krl_dec}) such that $\psi(v,x)$ is continuous in both $v$ and $x$ and $|\psi(v,x)| \leq \kappa$ for all $x \in \mathcal{X}$ and $v \in \mathcal{V}$;
    
    \item[A.$2$] Assume that the loss function $l$ in Eq.~(\ref{eq:rff_cla_opt}) is uniformly Lipschitz continuous with constant $M$, i.e., \[\left|l(y,x_1)-l(y,x_2)\right| \leq M\|x_1-x_2\|_2.\]
    
    \item[A.$3$] Recall $f_{\mathcal{H}} = \argmin_{f \in \mathcal{H}}\ \mathbb{E}[l_f]$, we assume that \[f_{\mathcal{H}} = L^rg, ~~\textnormal{for~some~} r \in [1/2,1]~\&~ g \in L_2(P_x);\]
    
\end{itemize}
Assumptions A.$1$ and A.$2$ are standard assumptions made in classification problems. A.$3$ is a regularity condition that is commonly used in approximation theory \citep{smale2003estimating}. It describes the decay rate of the coefficients of $f_{\mathcal{H}}$ along the basis given by the integral operator $L$, which further allows controlling the bias of the estimator. While being overlooked in the classification setting, A.$3$ is a key property used in RFFs regression to obtain a better computation and accuracy trade-off. Utlizing A.$3$ enables us to prove the following refined analysis.

\begin{restatable}{theorem}{thmworLipsch}\label{theo:minx}
Assume A.$1$, A.$2$ and A.$3$ hold. Suppose we have a measurable function $\tilde{\tau}: \mathcal{V} \rightarrow \mathbb{R}$ such that $\tilde{\tau}(v) \geq \tau_{\lambda}(v)$ almost surely. Denote $d_{\tilde{\tau}} = \int_{\mathcal{V}}\tilde{\tau}(v) dv$, and let $q(v) = \frac{\tilde{\tau}(v)}{d_{\tilde{\tau}}}$. We sample $v_1,\dots,v_s \sim q(v)$ and compute the hypothesis $\tilde{f}^{\lambda}$ by solving the optimization problem in Eq.~(\ref{eq:rff_cla_opt}). Let $\delta \in (0,1)$, if we have \vspace{-.5em} \[s \geq 12 d_{\tilde{\tau}}\log\frac{d(\lambda)}{\delta},\] with probability over $1-\delta$, 
\begin{IEEEeqnarray}{rCl}
\mathbb{E}( l_{\tilde{f}^{\lambda}}) - \mathbb{E} \left( f_{\mathcal{H}}\right) \leq 2MR\lambda^{r} + O\left(1/\sqrt{n}\right) \ . \label{eq:risk_wor}
\end{IEEEeqnarray}
\end{restatable}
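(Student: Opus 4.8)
The plan is to split the excess risk through a reference function $\tilde f_\lambda\in\tilde{\mathcal H}$ that simultaneously approximates the target $f_{\mathcal H}$ and has controlled $\tilde{\mathcal H}$-norm, writing
\[
\mathbb{E}(l_{\tilde f^\lambda})-\mathbb{E}(l_{f_{\mathcal H}})
=\underbrace{\big[\mathbb{E}(l_{\tilde f^\lambda})-\mathbb{E}(l_{\tilde f_\lambda})\big]}_{\text{estimation}}
+\underbrace{\big[\mathbb{E}(l_{\tilde f_\lambda})-\mathbb{E}(l_{f_{\mathcal H}})\big]}_{\text{approximation}} .
\]
For the approximation term I would never touch the loss directly: by the uniform Lipschitz assumption A.$2$ it is at most $M\,\|\tilde f_\lambda-f_{\mathcal H}\|_{L_2(P_x)}$, so the whole difficulty reduces to building a single element of $\tilde{\mathcal H}$ that is $L_2(P_x)$-close to $f_{\mathcal H}$. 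The estimation term is where the data $\{(x_i,y_i)\}$ (as opposed to the random features) enters, and it is handled by the optimality of $\tilde\beta^\lambda$ together with a uniform deviation bound.

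For the approximation term I would introduce the operator-smoothed target $f_\lambda:=(L+\lambda I)^{-1}Lf_{\mathcal H}$ and control $\|\tilde f_\lambda-f_{\mathcal H}\|_{L_2(P_x)}\le\|\tilde f_\lambda-f_\lambda\|_{L_2(P_x)}+\|f_\lambda-f_{\mathcal H}\|_{L_2(P_x)}$. The second piece is pure spectral calculus: writing $f_{\mathcal H}=L^rg$ from A.$3$ and setting $R:=\|g\|_{L_2(P_x)}$, one has $f_\lambda-f_{\mathcal H}=-\lambda(L+\lambda I)^{-1}L^rg$, and since $\sup_{t\ge 0}\lambda t^r/(t+\lambda)\le\lambda^{r}$ for $r\in[1/2,1]$ (weighted AM--GM), this gives $\|f_\lambda-f_{\mathcal H}\|_{L_2(P_x)}\le R\lambda^{r}$. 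The first piece is the genuinely RFF-specific step: I would take $\tilde f_\lambda$ to be the random-feature analogue of $f_\lambda$ (replacing $L$ by $L_s$ in the smoothing) and show both $\|\tilde f_\lambda\|_{\tilde{\mathcal H}}\lesssim\|f_\lambda\|_{\mathcal H}=O(R)$ and $\|\tilde f_\lambda-f_\lambda\|_{L_2(P_x)}\le R\lambda^{r}$. Crucially, this second estimate must again be $\lambda^{r}$ and not the cruder $\lambda^{1/2}$ that a norm-only argument yields; this is exactly where A.$3$ must be woven into the feature analysis, and it produces the stated constant $2MR\lambda^{r}$.

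The engine behind both RFF estimates is the operator concentration $\bigl\|(L+\lambda I)^{-1/2}(L-L_s)(L+\lambda I)^{-1/2}\bigr\|\le\tfrac12$, which I would establish by applying a Bernstein inequality for sums of i.i.d.\ self-adjoint operators, since $L_s-L$ is a centered average of $s$ rank-one operators built from the sampled $\psi_q(v_i,\cdot)$. The reweighting $q(v)=\tilde\tau(v)/d_{\tilde\tau}$ with $\tilde\tau\ge\tau_\lambda$ is precisely what makes the per-sample norm and variance of the preconditioned summands $\le d_{\tilde\tau}$ (because $\langle\psi_q(v,\cdot),(L+\lambda I)^{-1}\psi_q(v,\cdot)\rangle=\tau_\lambda(v)/q(v)\le d_{\tilde\tau}$), so that the Bernstein bound closes exactly under $s\ge 12\,d_{\tilde\tau}\log\bigl(d(\lambda)/\delta\bigr)$ with failure probability $\delta$, the intrinsic-dimension factor $\log d(\lambda)$ coming from the effective rank in the operator Bernstein bound. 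On this event all spectral manipulations relating $L$ and $L_s$ are valid.

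For the estimation term I would use that $\tilde\beta^\lambda$ minimizes the regularized empirical risk in Eq.~(\ref{eq:rff_cla_opt}): testing against the coefficient vector of $\tilde f_\lambda$ gives $\mathbb{E}_n(l_{\tilde f^\lambda})+\lambda\|\tilde\beta^\lambda\|_2^2\le\mathbb{E}_n(l_{\tilde f_\lambda})+\lambda\|\tilde f_\lambda\|_{\tilde{\mathcal H}}^2$, so up to the $O(\lambda)$ regularization gap the empirical risk of $\tilde f^\lambda$ does not exceed that of the reference; it then remains to pass from $\mathbb{E}_n$ to $\mathbb{E}$ uniformly, bounding $\mathbb{E}(l_{\tilde f^\lambda})-\mathbb{E}_n(l_{\tilde f^\lambda})$ by the Rademacher complexity of the loss class, peeling off the loss with Talagrand's contraction lemma (A.$2$) and using $\|\phi_{q,s}(x)\|_2\le\kappa$ (A.$1$) for the RKHS-ball complexity. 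I expect this last step to be the main obstacle: the regularizer only certifies $\|\tilde f^\lambda\|_{\tilde{\mathcal H}}=O(\lambda^{-1/2})$, so a global Rademacher bound would give the too-slow $O(\lambda^{-1/2}n^{-1/2})$, and recovering the clean $O(1/\sqrt n)$ claimed in Eq.~(\ref{eq:risk_wor}) requires a localized, variance-based argument that measures complexity through the effective degrees of freedom $d(\lambda)$ rather than the raw norm ball. Collecting the approximation bound $2MR\lambda^{r}$, the $O(\lambda)$ regularization gap, and this $O(1/\sqrt n)$ deviation, and noting $\lambda\le\lambda^{r}$ for $r\le 1$, yields the statement.
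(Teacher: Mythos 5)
Your decomposition and your treatment of the approximation term track the paper closely: the paper also reduces $\mathbb{E}(l_{\tilde f_{\mathcal H}})-\mathbb{E}(l_{f_{\mathcal H}})$ to $M\|\tilde f_{\mathcal H}-f_{\mathcal H}\|_{L_2(P_x)}$ via A.$2$, builds the reference function as the $\lambda$-smoothed projection $\tilde f_{\mathcal H}=L_s(L_s+\lambda I)^{-1}f_{\mathcal H}$, and controls it by exactly the operator Bernstein bound $\|(L+\lambda I)^{-1/2}(L_s-L)(L+\lambda I)^{-1/2}\|\le 1/2$ under $s\ge 12\,d_{\tilde\tau}\log(d(\lambda)/\delta)$, combined with the spectral estimate $\|\lambda(L+\lambda I)^{-1}L^rg\|\le\lambda^r\|g\|$ from A.$3$. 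Your two-stage route through $f_\lambda=(L+\lambda I)^{-1}Lf_{\mathcal H}$ is a cosmetic variant that lands on the same constant $2R\lambda^r$.

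The genuine gap is the estimation term, which you explicitly leave unresolved: you observe that the regularized objective only certifies $\|\tilde f^\lambda\|_{\tilde{\mathcal H}}=O(\lambda^{-1/2})$, conclude that a global Rademacher bound is too slow, and then assert without proof that a localized argument through $d(\lambda)$ would recover $O(1/\sqrt n)$. No such localization is carried out (and none appears in the paper for this theorem; localization is reserved for Theorem~\ref{theo:fast_rate}). The paper's resolution is much more elementary: it analyzes the \emph{norm-constrained} empirical risk minimizer, $\tilde f^\lambda=\argmin\mathbb{E}_n(l_{\tilde f})$ subject to $\|\beta\|_2^2\le 2R^2$, rather than the penalized form. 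The reference $\tilde\beta_{\mathcal H}$ is feasible for this constraint because Theorem~\ref{theo:f_H_approx_error} also proves $\|\tilde\beta_{\mathcal H}\|_2^2\le 2R^2$ (a bound your proposal gestures at with ``$\|\tilde f_\lambda\|_{\tilde{\mathcal H}}\lesssim O(R)$'' but never exploits). With a constant-radius $\ell_2$ ball, the middle term $\mathbb{E}_n(l_{\tilde f^\lambda})-\mathbb{E}_n(l_{\tilde f_{\mathcal H}})$ is nonpositive by optimality, and the two uniform-deviation terms are bounded by the \emph{global} Rademacher complexity of an $M$-Lipschitz loss composed with a radius-$\sqrt{2}R$ ball, which is already $O(MR\kappa/\sqrt n)$ with no $\lambda^{-1/2}$ inflation and no appeal to $d(\lambda)$. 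So the missing idea is not localization but the switch to (or the norm bound justifying) the constant-radius constrained formulation; as written, your argument does not establish the $O(1/\sqrt n)$ deviation term and therefore does not close.
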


Theorem \ref{theo:minx} expresses the trade-off between the computational cost and statistical efficiency through the regularization parameter $\lambda$, the effective dimension of the problem $d(\lambda)$, and the normalization constant $d_{\tilde{\tau}}$ of the sampling distribution. The regularization parameter $\lambda$ is used as a key quantity in the analysis of supervised learning setting \citep{caponnetto2007optimal,rudi2017generalization,li2019towards}. In particular, if we set $\lambda \propto 1/n^{2r}$, we observe that the estimator $\tilde{f}^{\lambda}$ attains the $O(1/\sqrt{n})$ learning rate \citep{bach2017equivalence}. As a consequence of Theorem~\ref{theo:minx}, we have the following bounds on the number of required features for the two strategies: \emph{plain} RFF (Corollary \ref{cla_risk_cor_p}) and \emph{leverage weighted} RFF (Corollary \ref{cla_risk_cor_q}).

\begin{cor}\label{cla_risk_cor_p}
If the probability density function from Theorem~\ref{theo:minx} is the spectral measure $p(v)$, then the upper bound on the learning risk from Eq.~(\ref{eq:risk_wor}) holds for all $s \geq 5\kappa^2/\lambda\log\frac{16d(\lambda)}{\delta}$.
\end{cor}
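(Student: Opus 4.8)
The plan is to derive Corollary \ref{cla_risk_cor_p} as the special case of Theorem \ref{theo:minx} in which the dominating function $\tilde\tau$ is chosen so that the induced importance distribution $q$ degenerates to the spectral measure $p$. Concretely, I would set $\tilde\tau(v) = \kappa^2 p(v)/\lambda$. This function is measurable (a constant multiple of the density $p$), and it dominates the leverage score almost surely: the bound $\tau_\lambda(v) \le \kappa^2 p(v)/\lambda$ was already established in the excerpt from Assumption A.1, $|\psi(v,x)| \le \kappa$, together with the fact that $L$ is positive semidefinite, so that $\|(L+\lambda I)^{-1}\|_{L_2(P_x)} \le 1/\lambda$. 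Thus the hypotheses of Theorem \ref{theo:minx} are met by this choice of $\tilde\tau$.

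I would then compute the two induced quantities. Because $p$ is a probability density, $d_{\tilde\tau} = \int_{\mathcal V}\tilde\tau(v)\,dv = (\kappa^2/\lambda)\int_{\mathcal V} p(v)\,dv = \kappa^2/\lambda$. The normalized sampling distribution is therefore $q(v) = \tilde\tau(v)/d_{\tilde\tau} = p(v)$, which is exactly the spectral-measure (plain RFF) sampling named in the corollary. With $d_{\tilde\tau} = \kappa^2/\lambda$ in hand, the excess-risk bound (\ref{eq:risk_wor}) transfers unchanged, and the feature-count requirement of the theorem becomes a condition of the form $s \gtrsim (\kappa^2/\lambda)\log(d(\lambda)/\delta)$.

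The only delicate point, and the one I expect to require care, is matching the precise constants: the theorem is phrased with the rounded requirement $s \ge 12\, d_{\tilde\tau}\log(d(\lambda)/\delta)$, while the corollary asserts the sharper $s \ge 5(\kappa^2/\lambda)\log(16 d(\lambda)/\delta)$, and a naive substitution of $d_{\tilde\tau} = \kappa^2/\lambda$ into the $12$-form does not reproduce these constants (indeed $5\log(16 d(\lambda)/\delta) \neq 12\log(d(\lambda)/\delta)$). I would therefore avoid the rounded statement and re-enter the proof of Theorem \ref{theo:minx} at its operator-concentration step --- presumably a matrix Bernstein/Chernoff control of $\|(L+\lambda I)^{-1/2}(L - L_s)(L+\lambda I)^{-1/2}\|$ whose failure probability admits a bound of the form $16\, d(\lambda)\exp(-s/(5 d_{\tilde\tau}))$ --- and track its constants directly with $d_{\tilde\tau} = \kappa^2/\lambda$. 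Forcing this probability below $\delta$ returns precisely $s \ge 5(\kappa^2/\lambda)\log(16 d(\lambda)/\delta)$. As a sanity check I would verify that in the regime of interest ($d(\lambda)/\delta$ not too small, say $\gtrsim 7$) this sharper threshold is implied by the conservative $12$-form of the theorem, confirming that the corollary is consistent with, and slightly sharpens, the general statement.
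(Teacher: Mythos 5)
Your proposal takes essentially the same route as the paper: the paper's entire proof of Corollary~\ref{cla_risk_cor_p} is the one-line substitution $\tilde{\tau}(v) = \kappa^2 p(v)/\lambda$, giving $d_{\tilde{\tau}} = \kappa^2/\lambda$ and $q = p$, exactly as in your first two paragraphs. Your additional worry about the constants is well founded --- the corollary's threshold $5(\kappa^2/\lambda)\log(16 d(\lambda)/\delta)$ does not follow from the theorem's $12\, d_{\tilde{\tau}}\log(d(\lambda)/\delta)$ by substitution, and the paper simply does not address this. Be aware, though, that your hypothesized tail bound $16\, d(\lambda)\exp(-s/(5 d_{\tilde{\tau}}))$ is not what the paper's concentration step actually yields: Lemma~\ref{lma:inte_apprx} with $\epsilon = 1/2$ gives a failure probability of the form $4 d(\lambda)\exp(-3s/(32 d_{\tilde{\tau}}))$, so re-entering the proof as you propose would produce yet a third set of constants rather than recover the corollary's. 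The mathematically substantive content of your argument is correct; the constant mismatch is an inconsistency inherited from the paper, not a gap in your reasoning.
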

\begin{proof}
We set $\tilde{l}(v) = p(v)\kappa^2/\lambda$ and obtain $d_{\tilde{l}} = \int_{\mathcal{V}}p(v)\kappa^2/\lambda dv = \kappa^2/\lambda$.
\end{proof}

Theorem \ref{theo:minx} and Corollary \ref{cla_risk_cor_p} have several implications on the choice of $\lambda$ and $s$ in the classification setting with plain RFF. In particular, the usual generalization bound for kernel estimator $\hat{f}^{\lambda}$ (i.e., minimizer of Eq.~(\ref{eq:krl_opm})) is $O(1/\sqrt{n})$~\citep[see e.g.,][]{rahimi2009weighted,shalev2014understanding,bach2017equivalence}. As such, if we set $\lambda = O(n^{1/2r})$, we can see that the RFFs estimator $\tilde{f}^{\lambda}$ incurs no loss of prediction accuracy while offering computational gains.

Specifically, in the benign case where $r = 1$, $O(\sqrt{n}\log n)$\footnote{We use the fact that $d(\lambda) \ll n$} features is able to achieve the $O(1/\sqrt{n})$ learning rate. Comparing with the existing analysis where $O(n\log n)$ features are required \citep{rahimi2009weighted,li2019towards}, our result is a significant improvement. We also achieve remarkable computational savings: from roughly $O(n^3)$ and $O(n^2)$ in time and space for original kernel methods to $O(n^2)$ and $O(n\sqrt{n})$ for the RFFs approximation. Moreover, when $r > 1/2$, we also obtain computational gain as the number of features required now is $\Omega(n^{1/2r})$ with $2r >1$. In the worst scenario where $r = 1/2$ (equivalent to assuming $f_{\mathcal{H}}$ exists), we recover the results from existing analysis \citep{rahimi2007random,li2019towards}.

To our knowledge, this is the first result showing that for a large class of classification problems ($r > 1/2$), RFFs classification can dramatically reduce the computational cost while preserving the optimal generalization properties.

\begin{cor}\label{cla_risk_cor_q}
If the probability density function from Theorem~\ref{theo:minx} is the ridge leverage score distribution $q^*(v)$, the upper bound on the risk from Eq.~(\ref{eq:risk_wor}) holds for all $s \geq  5d(\lambda)\log\frac{16d(\lambda)}{\delta}$.
\end{cor}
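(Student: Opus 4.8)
The plan is to obtain Corollary~\ref{cla_risk_cor_q} as the direct leverage-score specialization of Theorem~\ref{theo:minx}, in exactly the manner that Corollary~\ref{cla_risk_cor_p} was obtained for the spectral measure. The entire argument is a single admissible choice of the dominating function $\tilde{\tau}$ followed by a substitution, so I do not expect any new analytic content beyond what the theorem already supplies.

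First I would instantiate the theorem with $\tilde{\tau}(v) = \tau_{\lambda}(v)$, the exact ridge leverage function of Eq.~(\ref{eq:lev_fun}). The hypothesis of Theorem~\ref{theo:minx} requires $\tilde{\tau}(v) \geq \tau_{\lambda}(v)$ almost surely, which here holds with equality and is therefore trivially satisfied. Next I would evaluate the normalization constant using the effective-degrees-of-freedom identity recorded earlier in the excerpt,
\begin{align*}
d_{\tilde{\tau}} = \int_{\mathcal{V}} \tilde{\tau}(v)\, dv = \int_{\mathcal{V}} \tau_{\lambda}(v)\, dv = \mathrm{Tr}\big[L(L+\lambda I)^{-1}\big] = d(\lambda)\ ,
\end{align*}
so that the sampling density prescribed by the theorem, $q(v) = \tilde{\tau}(v)/d_{\tilde{\tau}} = \tau_{\lambda}(v)/d(\lambda)$, coincides with the ridge leverage score distribution $q^{*}(v)$. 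Thus sampling from $q^{*}$ as in the corollary hypothesis is precisely the instance of Theorem~\ref{theo:minx} generated by this choice, and the conclusion Eq.~(\ref{eq:risk_wor}) transfers verbatim once the feature-count requirement is met. It is worth recording that, since $\tau_{\lambda}(v) \leq p(v)\kappa^{2}/\lambda$ integrates to $d(\lambda) \leq \kappa^{2}/\lambda$, the leverage-weighted feature count $d(\lambda)$ of this corollary is never larger than the plain-RFF count $\kappa^{2}/\lambda$ of Corollary~\ref{cla_risk_cor_p}, and is typically far smaller; this is the whole motivation for the leverage-weighted scheme.

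The single point requiring care, and the one I regard as the main obstacle, is reconciling the constant $5$ and the $\log\frac{16 d(\lambda)}{\delta}$ factor in the stated bound $s \geq 5 d(\lambda)\log\frac{16 d(\lambda)}{\delta}$ with the cleaner $12\, d_{\tilde{\tau}}\log\frac{d(\lambda)}{\delta}$ appearing in the theorem statement: the two are \emph{not} related by a uniform inequality, since $5\log\frac{16 d(\lambda)}{\delta} \geq 12\log\frac{d(\lambda)}{\delta}$ fails once $d(\lambda)/\delta$ grows past a small threshold. To settle this I would return to the operator-concentration step inside the proof of Theorem~\ref{theo:minx}, namely the non-commutative Bernstein control of $(L+\lambda I)^{-1/2}(L_s - L)(L+\lambda I)^{-1/2}$ in the sense of Eqs.~(\ref{eq:integral_operator})--(\ref{eq:rff_integral_operator}), and read off the sharper constant that becomes available when the \emph{exact} leverage weights are used: the relevant variance proxy is then controlled exactly by $d(\lambda)$ rather than by a crude uniform surrogate, which yields the threshold $5 d(\lambda)\log\frac{16 d(\lambda)}{\delta}$ (of the form obtained by \citet{bach2017equivalence}). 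Everything else is the routine substitution described above.
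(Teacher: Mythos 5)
Your proposal is correct and takes essentially the same route as the paper: the paper's entire proof is the one-line substitution $\tilde{\tau}(v)=\tau_{\lambda}(v)$, which trivially satisfies $\tilde{\tau}\geq\tau_{\lambda}$, gives $d_{\tilde{\tau}}=\int_{\mathcal{V}}\tau_{\lambda}(v)\,dv=d(\lambda)$, and makes $q$ coincide with $q^{*}$. The constant mismatch you flag --- $5\log\frac{16d(\lambda)}{\delta}$ versus the theorem's $12\log\frac{d(\lambda)}{\delta}$, which are indeed not uniformly comparable --- is a genuine inconsistency in the paper, but note it afflicts Corollary~\ref{cla_risk_cor_p} identically and the paper simply ignores it rather than re-running the Bernstein step; also, your suggested fix slightly misattributes the source of the sharper constant, since the theorem's bound already uses $d_{\tilde{\tau}}=d(\lambda)$ exactly and the $5\log(16\,\cdot/\delta)$ form appears for the plain sampling corollary as well, so it is an imported constant from prior work rather than a gain specific to exact leverage weights.
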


\begin{proof}
For this corollary, we set $\tilde{\tau}(v) = \tau_{\lambda}(v)$ and deduce $d_{\tilde{l}} = \int_{\mathcal{V}}\tau_{\lambda}(v)dv = d(\lambda)$. 
\end{proof}

Corollary \ref{cla_risk_cor_q} details the number of features required in the leverage weighted RFF setting. Similar to the plain RFF setting, the RFFs estimator obtains $O(1/\sqrt{n})$ rate once we set $\lambda = O( n^{1/2r})$. However, the choice of $s$ now is determined by two factors: the regularity condition $r$ and the decay rate of the eigenspectrum of $L$.

We first consider the benign scenario where $r = 1$. Depending on the eigenspectrum decay rate, we have several different cases. Denote $\{\mu_1,\mu_2,\dots,\}$ to be the eigenvalue of $L$, in the best case where $L$ has finite rank, $d(\lambda)$ remains constant as $n$ grows. We therefore conclude that even $\Omega(1)$ features can guarantee the $O(1/\sqrt{n})$ learning rate. Next, if the eigenspectrum displays exponential decay, i.e.,~$\mu_i \propto C_0r^i$, we have $d(\lambda) \leq \log (C_0^0/\lambda)$. We can see that $s \geq \log n \log \log n$ is enough to achieve the $O(1/\sqrt{n})$ learning rate. As such, significant computational savings is obtained: from $O(n^3)$ and $O(n^2)$ to $O(n\log^4 n)$ and $O(n \log n)$ respectively. In the case of a slower decay with $\mu_i \propto C_0 i^{-2\gamma}$, we have $d(\lambda) \leq (R_0/\lambda)^{1/(2\gamma)}$ and $s\geq n^{1/4\gamma}\log n$. Hence, substantial computational savings can be achieved even in this case. Furthermore, in the worst case with $\mu_i$ close to $C_0 i^{-1}$, our bound implies that $s \geq n^{1/2}\log n$ features are sufficient.

The analysis for the worst case scenario where $r = 1/2$ is similar. The required numbers of features are $\Omega(1), \Omega(\log n \log \log n), \Omega(n^{1/2\gamma})$ and $\Omega(n\log n)$ for the cases where the eigenspectrum has finite rank, decays exponentially, proportional to $C_0 i^{-2\gamma}$ and close to $C_0i^{-1}$, respectively. Our results demonstrate that huge computational savings are possible as long as the eigenspectrum of $L$ displays fast decay (faster than $i^{-1}$).

\begin{table*}[t]
	\centering\fontsize{10}{20}\selectfont
	\begin{tabular}{l|c|c|c|c}
		
		\hline
		
		\hline
		
		\textsc{sampling scheme}&\textsc{spectrum} & \textsc{this work} &\textsc{li et al. (2021)}& \textsc{learning rate}\\\cline{1-5}
		
		\multirow{3}{7em}{\sc plain rff} & \textsc{finite rank} & $s \in \Omega (n^{1/2r})$& $s \in \Omega (n)$ & \multirow{3}{3em}{$O(1/\sqrt{n})$}\\\cline{2-4}
		
		&\textsc{exponential decay} &$ s \in \Omega(n^{1/2r} \cdot\log \log n)$ &$ s \in \Omega(n\cdot\log \log n)$ & 	\\\cline{2-4}
		
		&\textsc{polynomial decay} &$s \in \Omega(n^{1/2r}\cdot \log n)$ &$s \in \Omega(n\cdot \log n)$ & 	\\
		
		\cline{1-5}
		
		\cline{1-5}

		\multirow{4}{7em}{\textsc{weighted rff}}& \textsc{finite rank} & $s \in \Omega(1)$& $s \in \Omega(1)$ &\multirow{4}{3em}{$O(1/\sqrt{n})$} \\\cline{2-4}
		
		&\textsc{Exponential Decay} & $s \in \Omega (\log n \cdot \log \log n)$ &$s \in \Omega (\log n \cdot \log \log n)$  & 	\\\cline{2-4}
		
		&$\mu_i \propto i^{-2\gamma}, \gamma \geq 1$ & $s \in \Omega (n^{1/4\gamma r} \cdot \log n)$ &$s \in \Omega (n^{1/2\gamma} \cdot \log n)$  &	\\\cline{2-4}
		
		&$\mu_i \propto i^{-1}$ & $s \in \Omega (n^{1/2r} \cdot \log n)$ &$s \in \Omega (n \cdot \log n)$  &	\\
		\cline{1-5}
		\hline
	\end{tabular}
	\caption{The comparison of our results to the sharpest learning rates from prior work \citep{li2021towards}, where $r \in [1/2,1]$  .}\label{tab:wor_lip_pla}
\end{table*}

\subsubsection{Comparison with Existing Sharpest Results}
Under $O(1/\sqrt{n})$ learning rate setting, \citet{rahimi2009weighted}, \citet{bach2017equivalence}, and \citet{li2021towards} analyze the trade-off between the number of features and the statistical prediction accuracy. Table~\ref{tab:wor_lip_pla} provides a detailed comparison between this work and that from \citet{li2021towards}. Our results show that we obtain at least the same rate as the previous best rate in the literature when $r = 1/2$, while significantly improving the trade-off under benign conditions ($r > 1/2$). 

The first block of rows in Table~\ref{tab:wor_lip_pla} illustrates the difference between our work and that from \citet{li2021towards} when plain sampling is used. A key feature in our results is that when $r > 1/2$, our results state that the RFFs approximation provides computational gain without trading off for the prediction accuracy. In comparison, results from \citet{li2021towards} state that there is no computational gain ($s = \Omega(n)$) if we were to achieve the $O(1/\sqrt{n})$ learning rate. In addition, we recover the results from \citet{li2021towards} when $r = 1/2$.

We observe a similar pattern when leverage weighted RFF is used. In particular, our results match those from \citet{li2021towards}, when the eigenspectrum has finite rank or displays exponential decay. However, as soon as the eigenspectrum has polynomial decay, our result is sharper. Specifically, when the eigenvalue decays polynomially with $\mu_i \propto i^{-2\gamma}$ and $r = 1$, our results show that $\Omega(n^{1/4\gamma \log n})$ features are enough to achieve $O(1/\sqrt{n})$ learning rate, comparing with $\Omega(n^{1/2\gamma})$ features required from \citet{li2021towards}. When the eigenvalue decays close to $i^{-1}$, our results require $\Omega(n^{1/2}\log n)$ features while \citet{li2021towards} require $\Omega(n \log n)$ features.

\subsection{Refined Learning Rate}
In the previous section, we study the trade-off between the number of features and the statistical prediction accuracy in the $O(1/\sqrt{n})$ minimax learning rate setting. In general, it is hard for classification problems to obtain learning rates sharper than $O\left(1/\sqrt{n} \right)$. However, under some benign conditions, it is possible to obtain $O(1/n)$ convergence rate as demonstrated by \citet{bartlett2006convexity} and \citet{steinwart2008support}. As such, with the help of the following assumption, we derive a sharp learning rate for RFFs classification problems in this section.
\begin{enumerate}[leftmargin = 6.9mm]
    \item[A.$4$] Recall that $f_{\mathcal{H}}$ is the optimal estimator in A.$3$. We assume that there exists a constant $G$ such that for all $f \in \mathcal{H}$ \[\mathbb{E}[(f - f_{\mathcal{H}})^2] \leq G\mathbb{E}[l_f - l_{f_{\mathcal{H}}}] \ .\]  
\end{enumerate}
Assumption A.$4$ is a widely used condition for classification problems to obtain faster learning rates. It typically requires that the loss function $l$ is uniformly convex and the function space $\mathcal{H}$ is convex and uniformly bounded. It can be shown that many loss functions satisfy this assumption, including squared loss \citep{bartlett2005local} and hinge loss \citep[Chapter 8.5]{steinwart2008support}. Additional examples of these loss functions are discussed in \citet{bartlett2006convexity} and \citet{mendelson2002improving}. In addition, since $l$ is Lipschitz continuous, we can rewrite A.$4$ as \[\mathbb{E}[(l_f-l_{f_{\mathcal{H}}})^2] \leq L^2\mathbb{E}[(f - f_{\mathcal{H}})^2] \leq G L^2\mathbb{E}[l_f - l_{f_{\mathcal{H}}}] \ .\] This is the variance condition described in \citet[Chapter 7.3]{steinwart2008support}, which is also linked to the Massart's low noise condition or more generally to the Tsybakov condition \citep{sun2018but}. Intuitively speaking, the condition requires that the bayes classifier $P(Y=1 \mid X=x)$ is not close to $1/2$~\citep[see e.g.,][for more details]{tsybakov2004optimal,koltchinskii2011oracle}.

\begin{restatable}{theorem}{thmrefLipsch}\label{theo:fast_rate}
Assume A.$1$-A.$4$ hold. In addition, we assume the condition for $\tilde{\tau}$, $d_{\tilde{\tau}}$ and $q(v)$ hold as that in Theorem~\ref{theo:minx}. Let $\{\tilde{\mu}_1,\tilde{\mu}_2,\dots\}$ be the eigenvalues of the normalized Gram-matrix $(1/n) \tilde{\mathbf{K}}$, $c_1, c_2, c_3$ be some universal constant, and $\delta \in (0,1)$, if we have \vspace{-.5em} \[s \geq 12 d_{\tilde{\tau}}\log\frac{d(\lambda)}{\delta},\] with probability over $1-\delta$, 
\vspace{-.5em}
\begin{IEEEeqnarray}{rCl}
\mathbb{E}( l_{\tilde{f}^{\lambda}}) - \mathbb{E} \left( f_{\mathcal{H}}\right) \leq2MR\lambda^{r} + c_1\hat{r}^* + \frac{c_2}{n}\log \frac{1}{\delta}  \ , \nonumber
\end{IEEEeqnarray}
where \vspace{-.5em}\[\hat{r}^* \leq c_3 \min _{0 \leq h \leq n}\left(\frac{h}{n}+\sqrt{\frac{1}{n} \sum_{i>h} \tilde{\mu}_{i}}\right).\]
\end{restatable}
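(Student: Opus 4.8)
The plan is to split the excess risk into a bias term that is handled exactly as in Theorem~\ref{theo:minx} and a variance term where Assumption~A.$4$ buys the fast rate through localization. Fix a good comparator $\tilde{f}_0 \in \tilde{\mathcal{H}}$ approximating $f_{\mathcal{H}}$ and write
\[
\mathbb{E}[l_{\tilde{f}^{\lambda}}] - \mathbb{E}[l_{f_{\mathcal{H}}}]
= \big(\mathbb{E}[l_{\tilde{f}^{\lambda}}] - \mathbb{E}[l_{\tilde{f}_0}]\big)
+ \big(\mathbb{E}[l_{\tilde{f}_0}] - \mathbb{E}[l_{f_{\mathcal{H}}}]\big).
\]
The second (approximation) bracket is controlled by reusing the operator-theoretic argument behind Theorem~\ref{theo:minx}: under A.$1$ and the feature-count condition $s \ge 12 d_{\tilde{\tau}}\log(d(\lambda)/\delta)$, the spectral comparison of $L_s$ and $L$ holds with probability $1-\delta$, and combined with A.$2$ and the source condition $f_{\mathcal{H}} = L^{r}g$ of A.$3$ this reproduces the $2MR\lambda^{r}$ contribution verbatim; I would simply cite that step rather than repeat it.

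The new content is the first (estimation) bracket, where A.$4$ is decisive. I would pass to the excess-loss class $\mathcal{G}_B = \{(x,y)\mapsto l(y,f(x)) - l(y,f_{\mathcal{H}}(x)) : f\in\tilde{\mathcal{H}},\ \|f\|_{\tilde{\mathcal{H}}}\le B\}$, where $B$ is the high-probability norm bound on $\tilde{f}^{\lambda}$ furnished by the regularizer in~(\ref{eq:rff_cla_opt}) (using $\|\tilde{f}\|_{\tilde{\mathcal{H}}}\le\|\beta\|_2$). By A.$2$ and A.$4$ every $g\in\mathcal{G}_B$ obeys the variance bound $\mathbb{E}[g^2]\le GM^2\,\mathbb{E}[g]$, i.e. the variance of the excess loss is dominated by its mean. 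This is exactly the hypothesis of the local Rademacher / Talagrand-concentration oracle inequality of \citet{bartlett2005local}, whose application bounds the estimation bracket, with probability $1-\delta$, by $c_1\hat{r}^{*} + \tfrac{c_2}{n}\log\tfrac{1}{\delta}$, where $\hat{r}^{*}$ is the fixed point of the sub-root local Rademacher complexity of $\mathcal{G}_B$.

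It remains to identify $\hat{r}^{*}$ with the stated spectral quantity. Talagrand's contraction principle (via A.$2$) reduces the local Rademacher complexity of $\mathcal{G}_B$ to that of a localized ball in $\tilde{\mathcal{H}}$, and Mendelson's eigenvalue bound for reproducing-kernel balls \citep{mendelson2002improving} controls the latter by $\sqrt{\tfrac{1}{n}\sum_i\min(r,\tilde{\mu}_i)}$ in terms of the empirical eigenvalues $\tilde{\mu}_i$ of $(1/n)\tilde{\mathbf{K}}$. Solving the sub-root equation $\hat{r}^{*}\asymp\sqrt{\tfrac{1}{n}\sum_i\min(\hat{r}^{*},\tilde{\mu}_i)}$ and using the elementary split $\sum_i\min(r,\tilde{\mu}_i)\le hr+\sum_{i>h}\tilde{\mu}_i$ for any truncation level $h$ yields $\hat{r}^{*}\le c_3\min_{0\le h\le n}\big(\tfrac{h}{n}+\sqrt{\tfrac{1}{n}\sum_{i>h}\tilde{\mu}_i}\big)$, as claimed; adding the two brackets completes the bound.

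The main obstacle lies at the interface of the RFF approximation and the localization. Two points need care. First, A.$4$ is stated for $f\in\mathcal{H}$ while the estimator lives in $\tilde{\mathcal{H}}\nsubseteq\mathcal{H}$, so I would invoke the more general variance-condition form of \citet[Chapter 7.3]{steinwart2008support}, valid for bounded measurable functions, or show that the RFF ball inherits the condition with a controlled constant. Second, the radius $B$ of $\mathcal{G}_B$ typically scales like $\lambda^{-1/2}$ rather than $O(1)$, so the clean eigenvalue-only form of $\hat{r}^{*}$ requires that the localization be driven by the $L_2$-scale (which the variance condition ties to the excess risk) rather than by the crude RKHS-norm ball; making this $L_2$-localization compatible with the spectral comparison of $L_s$ and $L$ is the delicate step.
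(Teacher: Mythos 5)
Your proposal follows essentially the same route as the paper's proof: the same approximation/estimation decomposition (the paper uses the population minimizer $f_{\tilde{\mathcal{H}}}$ in $\tilde{\mathcal{H}}$ as the comparator and bounds its excess risk via Theorem~\ref{theo:f_H_approx_error} to get the $2MR\lambda^{r}$ term), the same local Rademacher machinery of \citet{bartlett2005local} with the variance condition from A.$2$ and A.$4$ and Talagrand contraction, the same kernel eigenvalue bound on the localized ball, and the same truncation argument for the fixed point. The two caveats you flag (transferring A.$4$ from $\mathcal{H}$ to $\tilde{\mathcal{H}}$, and the radius of the localized class) are genuine, but the paper resolves the second by working with the constrained class $\|\beta\|_2^2\le 2R^2$ (so the radius is $O(R)$, not $\lambda^{-1/2}$) and simply asserts the first without further justification.
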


Theorem \ref{theo:fast_rate} covers a wide range of cases and can provide sharp risk convergence rates. In particular, $\hat{r}^*$ has an upper bound of $O(1/\sqrt{n})$ in all cases, which happens when $\hat{\mu}_i$ decays polynomially as $O(n^{-\gamma})$ with $\gamma > 1$ and we let $h = 0$. On the other hand, if $\hat{\mu}_i$ decays exponentially, then setting $h = \ceil{\log n}$ implies that $\hat{r}^* \leq O(\log n /n)$. In the best case, when $(1/n) \mathbf{K}$ has only finite rank $d_K$, then $\hat{r}^* \leq O(1/n)$ by letting $h = d_K+1$. These different upper bounds provide insights into various trade-offs between computational complexity and statistical efficiency. We now split the discussion into two cases: plain RFF and leverage weighted RFF.

Under plain RFF strategy, similar to Corollary~\ref{cla_risk_cor_p}, we have $d_{\tilde{\tau}} \leq \kappa^2/\lambda$. If the eigenvalues decay polynomially, i.e., $\mu_i \propto i^{-\gamma}$ with $\gamma > 1$, then the learning rate is upper bounded by $O(1/\sqrt{n})$. In this case, we need $s =\Omega(n^{1/2r}\log n)$. On the other hand, if $\mu_i$ decays exponentially, $\Omega(n^{1/r})$ features are able to guarantee $O(\log n / n)$ learning rate. Finally, if the eigenspectrum has finite rank, $\Omega(n^{1/r})$ features yield $O(1/n)$ fast learning rate. Under the leverage weighted RFF, the required numbers of features and the corresponding learning rates for the three above cases are: \emph{i}) $s =\Omega(n^{1/4\gamma r} \log n)$ and $O(1/\sqrt{n})$ (polynomial decay $\mu_i \propto i^{-\gamma}$ with $\gamma > 1$); \emph{ii}) $s=\Omega(\log n \log \log n)$ and $O(\log n/n)$ (exponential decay); and \emph{iii}) $s = \Omega(1)$ and $O(1/n)$ (finite rank). 

\begin{table*}[t]
	\centering\fontsize{10}{20}\selectfont
	\begin{tabular}{l|c|c|c|c}
		
		\hline
		
		\hline
		
		\textsc{sampling scheme}&\textsc{results}&\textsc{spectrum} & \textsc{number of features} & \textsc{learning rate}\\\cline{1-5}
		
		\multirow{6}{7em}{\sc plain rff}&\multirow{3}{6em}{\textsc{this work}} & 
		\textsc{finite rank} & $s \in \Omega (n^{1/r})$& $O(1/n)$ \\\cline{3-5}
		
		&&\textsc{exponential decay}  &$ s \in \Omega(n^{1/r})$ &$O(\log n / n)$ 	\\\cline{3-5}
		
	    &&$\mu_i \propto i^{-\gamma}$  &$s \in\Omega(n^{1/2r}\cdot \log n)$ & $O(1/\sqrt{n})$	\\\cline{3-5}
		
		\cline{2-5}

		\cline{2-5}
		&\multirow{3}{6em}{\textsc{li et al. (2021)}} & \textsc{finite rank} &$s \in\Omega(n^2)$ & $O(1/n)$\\\cline{3-5}
		
		&&\textsc{exponential decay}  &$s \in\Omega(n^2)$ & $O(\log n /n)$	\\\cline{3-5}
		
		&&$\mu_i \propto i^{-\gamma}$  & $s \in\Omega(n\cdot \log n)$& $O(1/\sqrt{n})$ 	\\\cline{3-5}
		
		\cline{1-5}
		\cline{1-5}
		\cline{1-5}

		\multirow{6}{7em}{\textsc{weighted rff}}&\multirow{3}{6em}{\textsc{this work}}& \textsc{finite rank} & $s \in \Omega(1)$& $O(1/n)$\\\cline{3-5}
		
		&&\textsc{exponential decay} & $s \in \Omega (\log n \cdot \log \log n)$ & $O(\log n / n)$	\\\cline{3-5}
		
		&&$\mu_i \propto i^{-\gamma}$ & $s \in \Omega (n^{1/4\gamma r} \cdot \log n)$ & $O(1/\sqrt{n})$	\\\cline{3-5}
		
		\cline{2-5}
		
		\cline{2-5}
		&\multirow{3}{6em}{\textsc{sun (2018)}} & \textsc{finite rank} & $s \in \Omega (1)$& $O(1/n)$\\\cline{3-5}
		
		&&\textsc{exponential decay} &$s \in \Omega (\log^d n \cdot \log \log^d n)$  & $O(\log^{d+2} n / n)$	\\\cline{3-5}
		
		&&$\mu_i \propto i^{-\gamma}$ &$s \in \Omega (n^{\frac{2}{2+\gamma}} \cdot \log n)$  & $O(1/n^{\frac{\gamma}{2+\gamma}})$ 	\\\cline{3-5}
		
		\hline
		
		\hline
		\end{tabular}
	\caption{The comparison of our results to the sharpest results in the literature under fast learning rate setting, where $r \in [1/2,1]$.}\label{tab:ref_lip_wei}
\end{table*}
\subsubsection{Comparison with Existing Sharpest Results}
For fast learning rate scenario, \citet{li2021towards} and \citet{sun2018but} both study the trade-off between the number of features and the prediction accuracy and obtain a similar bound on the number of features required. Table~\ref{tab:ref_lip_wei} compare our results with that in \citet{li2021towards} under plain RFF sampling, and that in \citet{sun2018but} under leverage weighted RFF sampling. Similar to the analysis in the $O(1/\sqrt{n})$ scenario, our results strictly dominate previous optimal results in \citet{li2021towards} when $r > 1/2$ and match the obtained bound in \citet{li2021towards} when $r = 1/2$, because of Assumption A.$3$.

Under weighted RFF, our results match that in \citet{sun2018but} when the eigenspectrum has finite rank. However, when the eigenspectrum displays exponential decay, results from \citet{sun2018but} suffer from the curse of dimension, since both the number of features required and the learning rate obtained depend on the data dimension $d$. In contrast, our analysis does not have this dependency. When the eigenspectrum exhibits a polynomial decay, our results achieve the $O(1/\sqrt{n})$ learning rate while \citet{sun2018but} obtain a more flexible rate that depends on $\gamma$. If $\gamma \leq 2$, our results have a better trade-off as both the number of features and the learning rate are sharper than those from~\citet{sun2018but}. For example, if $\gamma = 2$, analysis in \citet{sun2018but} shows that $\Omega(n^{1/2}\log n)$ features can obtain $O(n^{-1/2})$ learning rate, whereas our results state that $\Omega(n^{1/8r}\log n )$ features yield the same learning rate. When $\gamma > 2$, the learning rate obtained by \citet{sun2018but} is faster than ours at the cost of increasing the number of features, i.e., $ \Omega (n^{\frac{2}{2+t}} \cdot \log n)$ versus $ \Omega(n^{\frac{1}{4\gamma r}} \cdot \log n)$. In particular, setting $\gamma=4$, we can see that \citet{sun2018but} obtain a fast $O(n^{2/3})$ learning rate, but at the cost of requiring $ \Omega(n^{1/3}\cdot \log n)$ random features. For the same setting, on the other hand, we obtain the minimax optimal $O(1/\sqrt{n})$ learning rate with only $\Omega(n^{1/16r}\cdot \log n)$ random features.

\section{Conclusion}
In this paper, we thoroughly study the generalization properties of RFFs classification with Lipschitz continuous loss such as support vector machine and logistic regression. Our main results for the first time demonstrate that RFFs classification can indeed provide computational gains without hurting the prediction accuracy when plain RFF is used. This is in contrast with all previous results that suggest that computational savings come at the expense of prediction accuracy. Furthermore, our analysis shows that a fast $O(1/n)$ learning rate is possible at the cost of increasing the number of features unless the leverage weighted RFF is used. However, a limitation in our work is that in the worst case where $r = 1/2$, we can see that the current analysis on RFFs classification cannot guarantee a computational gain. Therefore, how to obtain a sharper result in the worst case is an interesting future direction. In addition, how to efficiently approximate the leverage score is also an important future direction since it can often leads to significant reduction in the required number of feature as well as computational cost.


\bibliographystyle{plainnat}  
\bibliography{ref}

\newpage
\appendix
\section{Notation \& Definition}
In the rest of the appendix, we will denote $P(x,y)$ to be the joint probability density function on $\mathcal{X} \times \mathcal{Y}$, where $\mathcal{X}$ is the instance space and $\mathcal{Y}$ is the label space. We will use $\|A\|$ to denote the vector norm if $A$ is a vector and operator norm if $A$ is a matrix or an operator. We will use $L_2(P_x)$ to denote the space of square-integrable functions with respect to $P_x$, the mariginal distribution of $P(x,y)$ with $\|\cdot\|_{L_2(P_x)}$ being the norm function and $\langle \cdot, \cdot \rangle_{L_2(P_x)}$ being the inner product. In addition, we use $\otimes$ to denote the $L_2(P_x)$ outer product. We will also use Tr$(A)$ to denote the trace of matrix $A$ or operator $A$. 

Our analysis heavily depends on the notion of Rademacher complexity which we define below.
\begin{defn}[Rademacher Complexity]\label{def:rade}
Suppose $\{x_i\}_{i=1}^n$ are i.i.d samples from $P_x$. Let $\mathcal{H}$ be a class of functions mapping $\mathcal{X}$ to $\mathbb{R}$. Then, the random variable known as the \textit{empirical Rademacher complexity} is defined as 
\begin{IEEEeqnarray}{rCl}
\hat{R}_n(\mathcal{H}) = \mathbb{E}_{\sigma}\Bigg[\sup_{f\in\mathcal{H}}\Bigg|\frac{2}{n}\sum_{i=1}^n\sigma_if(x_i)\Bigg|\mid x_1,\cdots,x_n   \Bigg] \ ,\nonumber
\end{IEEEeqnarray}
where $\sigma_1,\cdots,\sigma_n$ are independent Rademacher random variables. The corresponding \textit{Rademacher complexity} is then defined as $$R_n(\mathcal{H}) = \mathbb{E}\Big[\hat{R}_n(\mathcal{H})\Big] \ ,$$  
where the expectation is taken with respect to $P_x$.
\end{defn}

Finally, for random Fourier feature vector $\phi_{q,s}(\cdot)$, we define the operator $\Psi_s : \mathbb{R}^s \rightarrow L_2(P_x)$ as
\begin{IEEEeqnarray}{rCl}
\Psi_s\beta = \phi_{q,s}^{\top}\beta \ .\label{eq:psi_operator}
\end{IEEEeqnarray}

\section{Random Fourier Features Approximation of $f_{\mathcal{H}}$}
In this section, we consider using RFFs to approximate the optimal function $f_{\mathcal{H}}$ in $\mathcal{H}$. In other words, we first sample $v_1,\dots,v_s$ from some importance sampling distribution $q(v)$ and approximate $f_{\mathcal{H}}$ with $\phi_{q,s}(\cdot)^{\top}\beta$ for some $\beta \in \mathbb{R}^s$. To find the best $\beta$, we form the following optimization problem:
\begin{IEEEeqnarray}{rCl}
\tilde{\beta}_{\mathcal{H}}:= \argmin_{\beta \in \mathbb{R}^s} \|\phi_{q,s}^{\top}\beta - f_{\mathcal{H}}\|_{L_2(P_x)}^2 + \lambda \|\beta\|_{2}^2 \label{fun_H_opt}.
\end{IEEEeqnarray}
We denote the $\tilde{f}_{\mathcal{H}}$ as the RFFs approximation of $f_{\mathcal{H}}$. Our next theorem indicates the approximation property of $\tilde{f}_{\mathcal{H}}$.

\begin{theorem}\label{theo:f_H_approx_error}
Assume A.$1$ and A.$3$ hold. Suppose we have a measurable function $\tilde{\tau}: \mathcal{V} \rightarrow \mathbb{R}$ such that $\tilde{\tau}(v) \geq \tau_{\lambda}(v)$ almost surely, denote $d_{\tilde{\tau}} = \int_{\mathcal{V}}\tilde{\tau}(v) dv$, and let $q(v) = \frac{\tilde{\tau}(v)}{d_{\tilde{\tau}}}$. Let $\delta \in (0,1)$, if we sample $v_1,\dots,v_s \sim q(v)$ such that \[s \geq 12 d_{\tilde{\tau}}\log\frac{d(\lambda)}{\delta},\] with probability over $1-\delta$ \[\left\| \tilde{f}_{\mathcal{H}}- f_{\mathcal{H}}\right\|_{L_2(P_x)} \leq 2R\lambda^{r}.\]
\end{theorem}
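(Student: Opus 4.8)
The plan is to reduce the statement to a resolvent perturbation bound for integral operators, and then separate it into a probabilistic concentration step and a deterministic approximation step that uses the source condition A.3. First I would write the minimizer of~(\ref{fun_H_opt}) in closed form: with the operator $\Psi_s$ from~(\ref{eq:psi_operator}) and its adjoint, the normal equations give $\tilde\beta_{\mathcal H}=(\Psi_s^*\Psi_s+\lambda I)^{-1}\Psi_s^* f_{\mathcal H}$, so $\tilde f_{\mathcal H}=\Psi_s\tilde\beta_{\mathcal H}$. Writing $C_s:=\Psi_s\Psi_s^*$ for the (weighted) empirical covariance operator on $L_2(P_x)$ and using the push-through identity, $\tilde f_{\mathcal H}=C_s(C_s+\lambda I)^{-1}f_{\mathcal H}$, hence
\[ \tilde f_{\mathcal H}-f_{\mathcal H}=-\lambda(C_s+\lambda I)^{-1}f_{\mathcal H}. \]
A short computation gives $\mathbb{E}_{v\sim q}[C_s]=L$, so $C_s$ is an unbiased estimate of $L$. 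By A.3, $f_{\mathcal H}=L^r g$, and I would take $R=\|g\|_{L_2(P_x)}$, reducing the claim to $\lambda\|(C_s+\lambda I)^{-1}L^r g\|\le 2R\lambda^r$.

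The probabilistic heart is to show that, under $s\ge 12 d_{\tilde\tau}\log(d(\lambda)/\delta)$, the empirical operator dominates $L$ up to a constant. I would set $A_i=\tfrac{p(v_i)}{q(v_i)}(L+\lambda I)^{-1/2}[\psi(v_i,\cdot)\otimes\psi(v_i,\cdot)](L+\lambda I)^{-1/2}$, so that $\tfrac1s\sum_i A_i=(L+\lambda I)^{-1/2}C_s(L+\lambda I)^{-1/2}$ has mean $(L+\lambda I)^{-1/2}L(L+\lambda I)^{-1/2}$. The key leverage-score computation is $\|A_i\|=\tfrac{p(v_i)}{q(v_i)}\langle\psi(v_i,\cdot),(L+\lambda I)^{-1}\psi(v_i,\cdot)\rangle=\tfrac{\tau_\lambda(v_i)}{q(v_i)}=d_{\tilde\tau}\tfrac{\tau_\lambda(v_i)}{\tilde\tau(v_i)}\le d_{\tilde\tau}$, using~(\ref{eq:lev_fun}), $q=\tilde\tau/d_{\tilde\tau}$, and $\tilde\tau\ge\tau_\lambda$ a.s. Since $\mathrm{Tr}[(L+\lambda I)^{-1/2}L(L+\lambda I)^{-1/2}]=d(\lambda)$, an intrinsic-dimension (effective-degrees-of-freedom) matrix Bernstein inequality then yields, with probability $\ge 1-\delta$, $\theta:=\|(L+\lambda I)^{-1/2}(L-C_s)(L+\lambda I)^{-1/2}\|\le 1/2$. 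Equivalently $C_s+\lambda I\succeq\tfrac12(L+\lambda I)$, i.e.\ $\|(L+\lambda I)^{1/2}(C_s+\lambda I)^{-1}(L+\lambda I)^{1/2}\|\le 2$.

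It remains to turn this operator domination together with A.3 into the rate $\lambda^r$. Inserting $(L+\lambda I)^{\pm1/2}$ around $(C_s+\lambda I)^{-1}$ already settles the worst case $r=1/2$: the clean spectral bound $\sup_{\mu\ge0}\mu^{1/2}/\sqrt{\mu+\lambda}\le 1$ gives $\lambda\|(C_s+\lambda I)^{-1}L^{1/2}g\|\le\sqrt2\,\sqrt\lambda\,\|g\|$, matching $\lambda^{1/2}=\lambda^r$. For general $r\in[1/2,1]$ I would instead transfer the source condition onto the empirical operator, splitting $(C_s+\lambda I)^{-1}L^r g=(C_s+\lambda I)^{-1}C_s^r g+(C_s+\lambda I)^{-1}(L^r-C_s^r)g$; the first term is bounded by the scalar inequality $\lambda t^r/(t+\lambda)\le\lambda^r$ to give $\le\lambda^r\|g\|$, and the remainder must be controlled through a non-commutative perturbation estimate for $t\mapsto t^r$ (L\"owner--Heinz operator monotonicity applied to $C_s+\lambda I\succeq\tfrac12(L+\lambda I)$), with the concentration constant chosen so the two contributions sum to at most $2\lambda^r\|g\|=2R\lambda^r$.

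The hard part is precisely this last transfer. Naive operator-norm splitting of $\lambda(C_s+\lambda I)^{-1}L^r$ only preserves the $\lambda^{1/2}$ scaling of the $r=1/2$ case, and $\theta\le 1/2$ alone does not force the cross term $\lambda(C_s+\lambda I)^{-1}(L^r-C_s^r)g$ to be $O(\lambda^r)$ when $L$ and $C_s$ do not commute; extracting the full $\lambda^r$ decay genuinely requires exploiting the smoothness of $f_{\mathcal H}=L^r g$ through the empirical operator, which is exactly where A.3 does its work. I therefore expect this non-commutative source-condition estimate --- rather than the concentration or the closed-form reduction --- to be the main technical obstacle, and would handle it by adapting the operator-monotonicity perturbation arguments developed for random-features regression (in the spirit of \citet{rudi2017generalization}), being careful that the resulting constants remain compatible with $\theta\le 1/2$.
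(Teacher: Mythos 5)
Your reduction is exactly the paper's: the closed form $\tilde f_{\mathcal H}=L_s(L_s+\lambda I)^{-1}f_{\mathcal H}$ (your $C_s$ is the paper's $L_s=\Psi_s\Psi_s^*$), hence $\tilde f_{\mathcal H}-f_{\mathcal H}=-\lambda(L_s+\lambda I)^{-1}f_{\mathcal H}$, followed by the whitened Bernstein bound $\|(L+\lambda I)^{-1/2}(L_s-L)(L+\lambda I)^{-1/2}\|\le 1/2$ with the leverage-score computation $\|\mathbf{R}_i\|=\tau_\lambda(v_i)/q(v_i)\le d_{\tilde\tau}$ and intrinsic dimension $d(\lambda)$. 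This is precisely Lemma~\ref{lma:inte_apprx} with $\epsilon=1/2$, and your constants are consistent with the stated threshold $s\ge 12\,d_{\tilde\tau}\log(d(\lambda)/\delta)$.

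The one place you stop short is the deterministic step that converts $\theta\le 1/2$ together with A.$3$ into the factor $\lambda^{r}$, and your diagnosis of why this is delicate is accurate. The paper handles it by writing $\lambda(L_s+\lambda I)^{-1}f_{\mathcal H}=\lambda(L+\lambda I)^{-1/2}(I+E)^{-1}(L+\lambda I)^{-1/2}L^{r}g$ with $E=(L+\lambda I)^{-1/2}(L_s-L)(L+\lambda I)^{-1/2}$, and then asserting the bound $2\|\lambda(L+\lambda I)^{-1}L^{r}g\|\le 2\lambda^{r}\|g\|$; but extracting $(I+E)^{-1}$ from the middle of that product amounts to using $\|ABC\|\le\|B\|\,\|AC\|$, which fails for non-commuting operators, so the paper's own two lines only rigorously deliver the $\lambda^{1/2}$ rate that you already obtain from $\|(L+\lambda I)^{1/2}(L_s+\lambda I)^{-1}(L+\lambda I)^{1/2}\|\le 2$. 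Closing the gap for $r>1/2$ requires the operator-monotonicity machinery you allude to, e.g.\ the factorization $(L_s+\lambda I)^{-1}L^{r}=(L_s+\lambda I)^{-(1-r)}\cdot(L_s+\lambda I)^{-r}(L+\lambda I)^{r}\cdot(L+\lambda I)^{-r}L^{r}$ with the middle factor controlled by $\|(L_s+\lambda I)^{-1}(L+\lambda I)\|^{r}$ via the Cordes inequality, in the spirit of \citet{rudi2017generalization}; even then one must check that the quantity being raised to the power $r$ is genuinely bounded by a constant under $\theta\le 1/2$. So: you take the same route as the paper, your concentration step is complete and correct, but the source-condition transfer is left open in your write-up --- and the paper's own treatment of that step is not a valid substitute, so the non-commutative estimate does need to be supplied explicitly.
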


\begin{proof}
We approximate $f_{\mathcal{H}}$ with $\tilde{f}_q(\cdot) = \phi_{q,s}^{\top}\beta$ and formulate the optimization problem in Eq.~(\ref{fun_H_opt}). According to~Eq.~(\ref{eq:psi_operator}), we can further rewrite Eq.~(\ref{fun_H_opt}) as 
\begin{IEEEeqnarray}{rCl}
\left\|\Psi_s\beta - f_{\mathcal{H}}\right\|_{L_2(P_x)}^2 + \lambda \|\beta\|_2^2 \nonumber \ .
\end{IEEEeqnarray}
The solution of the above optimization problem can be computed as \[\tilde{\beta}_{\mathcal{H}} = \Psi_s^*\left(\Psi_s\Psi_s^* + \lambda I \right)^{-1}f_{\mathcal{H}} \,\]
where $\Psi^*$ denotes the adjoint operator of $\Psi$. Hence, we have 
\begin{IEEEeqnarray*}{rCl}
\tilde{f}_{\mathcal{H}}- f_{\mathcal{H}} &=& \Psi_s\tilde{\beta}_{\mathcal{H}} - f_{\mathcal{H}}  \ , \\
&=&   \Psi_s\Psi_s^*\left(\Psi_s\Psi_s^* + \lambda I \right)^{-1}f_{\mathcal{H}} -f_{\mathcal{H}} \ , \\
&=&  L_s\left(L_s + \lambda I \right)^{-1}f_{\mathcal{H}} -f_{\mathcal{H}} \ , \\
&=&  \lambda \left(L_s + \lambda I \right)^{-1}f_{\mathcal{H}} \ ,\\
& =& \lambda \left(L + \lambda I \right)^{-\frac{1}{2}}\left(I + \left(L + \lambda I \right)^{-\frac{1}{2}}(L_s - L) \left(L + \lambda I \right)^{-\frac{1}{2}}\right)^{-1} \left(L + \lambda I \right)^{-\frac{1}{2}} f_{\mathcal{H}} \ ,\\
&=& \lambda \left(L + \lambda I \right)^{-\frac{1}{2}}\left(I + \left(L + \lambda I \right)^{-\frac{1}{2}}(L_s - L) \left(L + \lambda I \right)^{-\frac{1}{2}}\right)^{-1} \left(L + \lambda I \right)^{-\frac{1}{2}} L^rg \ ,
\end{IEEEeqnarray*}
where for last step we used assumption A.$3$.

In the mean time, we have 
\begin{IEEEeqnarray*}{rCl}
\left\|\tilde{f}_{\mathcal{H}}\right\| &\leq& \left\| \tilde{\beta}_{\mathcal{H}}\right\|_2^2 = \langle f_{\mathcal{H}},\left(L_s + \lambda I \right)^{-1}L_s\left(L_s + \lambda I \right)^{-1}f_{\mathcal{H}}\rangle_{L_2(P_x)} \ ,\\
&\leq & \langle f_{\mathcal{H}},\left(L_s + \lambda I \right)^{-1} f_{\mathcal{H}}\rangle_{L_2(P_x)}\ , \\
& =& \langle f_{\mathcal{H}},\left(L + \lambda I \right)^{-\frac{1}{2}}\left(I + \left(L + \lambda I \right)^{-\frac{1}{2}}(L_s - L) \left(L + \lambda I \right)^{-\frac{1}{2}}\right)^{-1} \left(L + \lambda I \right)^{-\frac{1}{2}} f_{\mathcal{H}}\rangle_{L_2(P_x)}
\end{IEEEeqnarray*}

We now utilize Lemma~\ref{lma:inte_apprx} to lower bound $\left(L + \lambda I \right)^{-\frac{1}{2}}(L_s - L) \left(L + \lambda I \right)^{-\frac{1}{2}}$. Specifically, let $\epsilon = 1/2$ in Lemma~\ref{lma:inte_apprx}, we obtain that when \[s \geq 12 d_{\tilde{\tau}}\log\frac{d(\lambda)}{\delta},\] we have with probability over $1-\delta$ such that \[\left\|\left(L + \lambda I \right)^{-\frac{1}{2}}(L_s - L) \left(L + \lambda I \right)^{-\frac{1}{2}}\right\| \preceq \frac{1}{2}I.\]

Therefore, 
\begin{IEEEeqnarray*}{rCl}
\left\|\tilde{f}_{\mathcal{H}}- f_{\mathcal{H}}\right\|_{L_2(P_x)}&\leq & (1-\frac{1}{2})^{-1} \left\|\lambda \left(L + \lambda I \right)^{-1}L^rg\right\|_{L_2(P_x)} \ ,\\
&=& 2\left\|\lambda \left(L + \lambda I \right)^{-(1-r)} \left(L + \lambda I \right)^{-r}L^rg\right\|_{L_2(P_x)} \ ,\\
&\leq& 2 \lambda^{r}\|g\|_{L_2(P_x)} \ ,\\
&=& 2R\lambda^r\ .
\end{IEEEeqnarray*}

Finally, recall $r \in [1/2,1]$, we have \[\|\tilde{\beta}_{\mathcal{H}}\|_2^2 \leq 2 \langle f_{\mathcal{H}},\left(L + \lambda I \right)^{-1} f_{\mathcal{H}}\rangle_{L_2(P_x)}= 2 \langle g,\left(L + \lambda I \right)^{-1} L^{2r}g\rangle_{L_2(P_x)} \leq 2R^2.\]
\end{proof}

\section{Approximation Error of Integral Operator}
The following lemma characterizes the approximation error of $L$ using $L_s$.

\begin{lma}\label{lma:inte_apprx}
Let $L$ and $L_s$ be defined as in Eq.~(\ref{eq:integral_operator}) and Eq.(\ref{eq:rff_integral_operator}) respectively and recall we denote the leverage score as $\tau_{\lambda}(v)$. Suppose we have a measurable function $\tilde{\tau}: \mathcal{V} \rightarrow \mathbb{R}$ such that $\tilde{\tau}(v) \geq \tau_{\lambda}(v)$ almost surely, denote $d_{\tilde{\tau}} = \int_{\mathcal{V}}\tilde{\tau}(v) dv$, and let $q(v) = \frac{\tilde{\tau}(v)}{d_{\tilde{\tau}}}$. Assume A.$1$ holds and we draw random features $v_1\dots,v_s \sim q(v)$, for some constant $\epsilon > 0$ and $\delta \in (0,1)$, if the number of features satisfies \[s \geq \left(\frac{2}{\epsilon^2} + \frac{4}{3\epsilon} \right) d_{\tilde{\tau}}\log\frac{d(\lambda)}{\delta},\] then with probability over $1-\delta$, we have \[\left\|\left(L+ \lambda I\right)^{-\frac{1}{2}}(L_s - L)\left(L+ \lambda I\right)^{-\frac{1}{2}} \right\| \leq \epsilon.\]
\end{lma}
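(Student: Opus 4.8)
The plan is to view $L_s$ as an empirical average of i.i.d.\ rank-one operators and apply an operator (effective-dimension) Bernstein inequality to the \emph{whitened}, centered summands. Write $C_\lambda := (L+\lambda I)^{-1/2}$ and define, for $v \sim q$, the importance-reweighted whitened feature $\xi(v) := \sqrt{p(v)/q(v)}\, C_\lambda \psi(v,\cdot) \in L_2(P_x)$. Setting $\xi_i := \xi(v_i)$, the target operator decomposes as $C_\lambda (L_s - L) C_\lambda = \frac{1}{s}\sum_{i=1}^s X_i$ with $X_i := \xi_i \otimes \xi_i - \mathbb{E}_{v\sim q}[\xi_i \otimes \xi_i]$. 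The importance weight is exactly what makes this unbiased: $\mathbb{E}_{v\sim q}[\xi_i\otimes\xi_i] = C_\lambda L C_\lambda = L(L+\lambda I)^{-1}$ (this also covers plain RFF, where $q=p$ and the weight is $1$), so each $X_i$ is self-adjoint, mean-zero and trace-class, and the claim reduces to concentration of $\frac{1}{s}\sum_i X_i$ in operator norm.

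The heart of the argument is two deterministic estimates tying everything to the leverage score. First, the \textbf{uniform norm bound}: since $\|\xi_i\|^2 = \frac{p(v_i)}{q(v_i)}\langle \psi(v_i,\cdot), (L+\lambda I)^{-1}\psi(v_i,\cdot)\rangle = \frac{\tau_\lambda(v_i)}{q(v_i)} = d_{\tilde\tau}\frac{\tau_\lambda(v_i)}{\tilde\tau(v_i)} \le d_{\tilde\tau}$, using $q=\tilde\tau/d_{\tilde\tau}$ and $\tilde\tau \ge \tau_\lambda$ almost surely, each rank-one term satisfies $\|\xi_i\otimes\xi_i\| = \|\xi_i\|^2 \le d_{\tilde\tau}$, whence $\|X_i\| \le d_{\tilde\tau}$ (both $\xi_i\otimes\xi_i$ and its mean are PSD with norm at most $d_{\tilde\tau}$). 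Second, the \textbf{variance/effective-dimension bound}: from $(\xi_i\otimes\xi_i)^2 = \|\xi_i\|^2(\xi_i\otimes\xi_i) \preceq d_{\tilde\tau}(\xi_i\otimes\xi_i)$ we obtain $\mathbb{E}[X_i^2] \preceq \mathbb{E}[(\xi_i\otimes\xi_i)^2] \preceq d_{\tilde\tau}\,L(L+\lambda I)^{-1}$, so $\|\mathbb{E}[X_i^2]\| \le d_{\tilde\tau}$ and, crucially, $\mathrm{Tr}\big(\mathbb{E}[X_i^2]\big) \le d_{\tilde\tau}\,\mathrm{Tr}\big(L(L+\lambda I)^{-1}\big) = d_{\tilde\tau}\, d(\lambda)$. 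The ratio $\mathrm{Tr}/\|\cdot\|$ of the variance proxy is therefore controlled by $d(\lambda)$, which is exactly the intrinsic (effective) dimension that will replace the ambient dimension.

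With these in hand I would invoke the intrinsic-dimension form of the operator Bernstein inequality, valid on the separable Hilbert space $L_2(P_x)$ precisely because $L(L+\lambda I)^{-1}$ is trace-class. For $A = \frac{1}{s}\sum_i X_i$ it yields, schematically, $\Pr(\|A\| \ge \epsilon) \le c\, d(\lambda)\exp\!\big(\tfrac{-s\epsilon^2/2}{d_{\tilde\tau}(1+\epsilon/3)}\big)$, with prefactor the effective dimension $d(\lambda)$ rather than $\dim L_2(P_x)=\infty$. Forcing the right-hand side below $\delta$ and solving for $s$ gives $s \ge \big(\tfrac{2}{\epsilon^2}+\tfrac{4}{3\epsilon}\big)d_{\tilde\tau}\log\frac{d(\lambda)}{\delta}$, matching the statement, the constant $4/3$ absorbing the prefactor and the bounded-term contribution. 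The main obstacle is not the algebra but the concentration step in infinite dimensions: one must use a Bernstein bound whose prefactor is the \emph{intrinsic} dimension $\mathrm{Tr}(V)/\|V\|$ of the variance proxy $V$, since any ambient-dimension bound is vacuous here, and carefully verify the trace-class hypotheses and track the constants through that inequality.
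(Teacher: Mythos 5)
Your proposal is correct and follows essentially the same route as the paper: both rewrite $(L+\lambda I)^{-1/2}(L_s-L)(L+\lambda I)^{-1/2}$ as an average of i.i.d.\ whitened rank-one operators, bound their operator norm by $d_{\tilde\tau}$ via the leverage score, bound the variance proxy by $d_{\tilde\tau}\,L(L+\lambda I)^{-1}$ so that its intrinsic dimension is $d(\lambda)$, and invoke the intrinsic-dimension operator Bernstein inequality (Tropp, Corollary 7.3.3). The only cosmetic difference is that you center the summands explicitly, whereas the paper applies the Bernstein bound directly to the uncentered operators $\mathbf{R}_i$ and their mean.
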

\begin{proof}
We employ the Bernstein inequality to prove the lemma. To this end, we first define \[\mathbf{R}_i = \frac{p(v_i)}{q(v_i)}\left(L+ \lambda I\right)^{-\frac{1}{2}}\psi(v_i,\cdot)\otimes\psi(v_i,\cdot) \left(L+ \lambda I\right)^{-\frac{1}{2}},\] then by Eq.~(\ref{eq:rff_integral_operator}) we have \[\left(L+ \lambda I\right)^{-\frac{1}{2}} L_s\left(L+ \lambda I\right)^{-\frac{1}{2}} = \frac{1}{s}\sum_{i=1}^s \mathbf{R}_i.\] We then immediately have $\mathbb{E}\left(\mathbf{R}_i\right) = \left(L+ \lambda I\right)^{-\frac{1}{2}} L\left(L+ \lambda I\right)^{-\frac{1}{2}}$.

Furthermore, we have
\begin{IEEEeqnarray*}{rCl}
\left\| \mathbf{R}_i\right\| &=& \frac{p(v_i)}{q(v_i)}\left\|\left(L+ \lambda I\right)^{-\frac{1}{2}}\psi(v_i,\cdot)\otimes\psi(v_i,\cdot) \left(L+ \lambda I\right)^{-\frac{1}{2}} \right\| \ ,\\
&=& \frac{p(v_i)}{q(v_i)} \langle \psi(v_i,\cdot), \left(L+ \lambda I\right)^{-1}\psi(v_i,\cdot) \rangle_{L_2(P_x)} \ , \\
&=& \frac{\tau_{\lambda}(v_i)}{q(v_i)} =  \frac{\tau_{\lambda}(v_i)}{\tilde{\tau}(v_i)}d_{\tilde{\tau}} \ ,\\
&\leq & d_{\tilde{\tau}}\ .
\end{IEEEeqnarray*}

In addition, $\mathbf{R}_i$ is a self-adjoint operator, we have 
\begin{IEEEeqnarray*}{rCl}
\mathbf{R}_i\mathbf{R}_i^* &=& \mathbf{R}_i^*\mathbf{R}_i \ ,\\
&=& \frac{p(v_i)^2}{q(v_i)^2}\left(L+ \lambda I\right)^{-\frac{1}{2}}\psi(v_i,\cdot)\otimes\psi(v_i,\cdot) \left(L+ \lambda I\right)^{-1}\psi(v_i,\cdot)\otimes\psi(v_i,\cdot) \left(L+ \lambda I\right)^{-\frac{1}{2}} \ , \\
& =& \frac{p(v_i)}{q(v_i)} \langle \psi(v_i,\cdot), \left(L+ \lambda I\right)^{-1}\psi(v_i,\cdot) \rangle_{L_2(P_x)} \frac{p(v_i)}{q(v_i)} \left(L+ \lambda I\right)^{-\frac{1}{2}}\psi(v_i,\cdot)\otimes\psi(v_i,\cdot) \left(L+ \lambda I\right)^{-\frac{1}{2}} \ ,\\
&=& \frac{\tau_{\lambda}(v_i)}{q(v_i)}\frac{p(v_i)}{q(v_i)}\left(L+ \lambda I\right)^{-\frac{1}{2}}\psi(v_i,\cdot)\otimes\psi(v_i,\cdot) \left(L+ \lambda I\right)^{-\frac{1}{2}} \ , \\
&\leq & d_{\tilde{\tau}}\frac{p(v_i)}{q(v_i)}\left(L+ \lambda I\right)^{-\frac{1}{2}}\psi(v_i,\cdot)\otimes\psi(v_i,\cdot) \left(L+ \lambda I\right)^{-\frac{1}{2}} \ .
\end{IEEEeqnarray*}

As a result, we have 
\begin{IEEEeqnarray*}{rCl}
\mathbb{E}\left( \mathbf{R}_i \mathbf{R}_i^*\right) &\preceq& d_{\tilde{\tau}} \mathbb{E}\left( \frac{p(v_i)}{q(v_i)}\left(L+ \lambda I\right)^{-\frac{1}{2}}\psi(v_i,\cdot)\otimes\psi(v_i,\cdot) \left(L+ \lambda I\right)^{-\frac{1}{2}}\right) \ , \\
&=&d_{\tilde{\tau}} \left(L+ \lambda I\right)^{-\frac{1}{2}} L\left(L+ \lambda I\right)^{-\frac{1}{2}} := \mathbf{M}\ .
\end{IEEEeqnarray*}

It is easy to see $m = \|\mathbf{M}\| \leq d_{\tilde{\tau}}$. We now let $\bar{\mathbf{R}}_s = \frac{1}{s}\sum_{i=1}^s \mathbf{R}_i$ and $d = \frac{\textnormal{Tr}(\mathbf{M})}{d_{\tilde{\tau}}} = d(\lambda)$, where we recall $d(\lambda) = \textnormal{Tr}\left(L(L+\lambda I)^{-1}\right)$. Applying the Bernstein inequality in Lemma~\ref{matx_con}, we have \[P\left(\left\| \bar{\mathbf{R}}_s - \mathbf{R}\right\| \geq \epsilon \right)\leq 4d(\lambda)\exp\left(\frac{-s\epsilon^2}{d_{\tilde{\tau}}+2d_{\tilde{\tau}}}\epsilon/3 \right).\]

By letting $4d(\lambda)\exp\left(\frac{-s\epsilon^2}{d_{\tilde{\tau}}+2d_{\tilde{\tau}}}\epsilon/3 \right) \leq \delta$ and rearrange the terms, we prove the lemma.
\end{proof}

\section{Proof of Theorem \ref{theo:minx}}

\thmworLipsch*
\begin{proof}
To this end, we now consider the estimator $\tilde{f}^{\lambda} \in \tilde{\mathcal{H}}$ obtained as $\tilde{f}^{\lambda}: = \argmin \mathbb{E}_n(l_{\tilde{f}})$ subject to $\|\beta\|_2^2 \leq 2R^2$ and recall $\tilde{f}_{\mathcal{H}} \in \tilde{\mathcal{H}}$, we have the following decomposition 
\begin{IEEEeqnarray*}{rCl}
\mathbb{E}(l_{\tilde{f}^{\lambda}}) &=& \mathbb{E}(l_{\tilde{f}^{\lambda}})- \mathbb{E}_n(l_{\tilde{f}^{\lambda}})+\mathbb{E}_n(l_{\tilde{f}^{\lambda}})-\mathbb{E}_n(l_{\tilde{f}_{\mathcal{H}}})+\mathbb{E}_n(l_{\tilde{f}_{\mathcal{H}}})-\mathbb{E}(l_{\tilde{f}_{\mathcal{H}}})+\mathbb{E}(l_{\tilde{f}_{\mathcal{H}}})-\mathbb{E}(l_{f_{\mathcal{H}}})+\mathbb{E}(l_{f_{\mathcal{H}}}) \ , \\
&\leq & \mathbb{E}(l_{\tilde{f}^{\lambda}})- \mathbb{E}_n(l_{\tilde{f}^{\lambda}}) +\mathbb{E}_n(l_{\tilde{f}_{\mathcal{H}}})-\mathbb{E}(l_{\tilde{f}_{\mathcal{H}}})+\mathbb{E}(l_{\tilde{f}_{\mathcal{H}}})-\mathbb{E}(l_{f_{\mathcal{H}}})+\mathbb{E}(l_{f_{\mathcal{H}}}) \ ,\\
&\leq & O\left(\frac{1}{\sqrt{n}}\right)  +\mathbb{E}(l_{\tilde{f}_{\mathcal{H}}})-\mathbb{E}(l_{f_{\mathcal{H}}})+\mathbb{E}(l_{f_{\mathcal{H}}}) \ ,\\
&\leq & O\left(\frac{1}{\sqrt{n}}\right)  + M \left\|\tilde{f}_{\mathcal{H}}-f_{\mathcal{H}} \right\|_{L_2(P_x)}+\mathbb{E}(l_{f_{\mathcal{H}}}) \ , \\
&\leq & 2MR\lambda^{r} + O\left(\frac{1}{\sqrt{n}}\right)  +  \mathbb{E}(l_{f_{\mathcal{H}}})\ .
\end{IEEEeqnarray*}
where for the second step, we use the fact that $\tilde{f}^{\lambda}$ is the minimizer of the empirical risk. For the third step, we use standard results for Rademacher complexities of $L_2$-balls \citep[Theorem 8]{bartlett2002rademacher} applied to $\mathbb{E}(l_{\tilde{f}^{\lambda}})- \mathbb{E}_n(l_{\tilde{f}^{\lambda}})$ and $\mathbb{E}_n(l_{\tilde{f}_{\mathcal{H}}})-\mathbb{E}(l_{\tilde{f}_{\mathcal{H}}})$. For the fourth step, we utilize the Lipschitz continuity of the loss function and the last step is due to Theorem~\ref{theo:f_H_approx_error}.
\end{proof}

\section{Proof of Theorem \ref{theo:fast_rate}}
\thmrefLipsch*
\begin{proof}
To prove Theorem \ref{theo:fast_rate}, we rely on the notion of local Rademacher complexity introduced in Lemma \ref{lma:local_rade_tr}. In order to do that, we need two steps. The first step is to find a proper sub-root function $\hat{\varphi}_n(r)$. The second step is to find the fixed point of $\hat{\varphi}_n(r)$. Hence, the following is devoted to solving these two problems. 

We first recall that $\tilde{\mathcal{H}}$ is an RKHS and hence is convex. Therefore, it is a star hull around every of its interior point (i.e., $\operatorname{star}\left(\tilde{\mathcal{H}}, f_{0}\right)=\left\{f_{0}+\alpha\left(f-f_{0}\right) \mid f \in \tilde{\mathcal{H}} \wedge \alpha \in[0,1]\right\} = \tilde{\mathcal{H}} , \forall f_0 \in \tilde{\mathcal{H}}$). In addition, let $f_{\tilde{\mathcal{H}}}$ be the estimator minimizing the expected risk in RKHS $\tilde{\mathcal{H}}$, subject to $\|f\|_{\tilde{\mathcal{H}}} \leq 2R^2$, Assumption A.$4$ then implies that $\mathbb{E}\left(f-f_{\tilde{\mathcal{H}}}\right)^2 \leq B \mathbb{E}\left(l_f - l_{f_{\tilde{\mathcal{H}}}}\right)$. We also recall $\tilde{f}^{\lambda} := \argmin_{f \in \tilde{\mathcal{H}}} \mathbb{E}_n(l_f)$ subject to $\|f\|_{\tilde{\mathcal{H}}} \leq 2R^2$.

Let $\psi(r)$ be a sub-root function and satisfy
\[\psi(r) \geq B L \mathbb{E} R_{n}\left\{f \in \tilde{\mathcal{H}}: L^{2} \mathbb{E}\left(f-f_{\tilde{\mathcal{H}}}\right)^{2} \leq r\right\}.\]
We apply Lemma~\ref{lma:local_rade_tr} to the class $l_{f}-l_{f_{\tilde{\mathcal{H}}}}$ with $T(f)=L^{2} \mathbb{E}\left(f-f_{\tilde{\mathcal{H}}}\right)^{2}$. By the contraction property of Rademacher complexity \citep[Theorem 14]{bartlett2002rademacher} and the symmetry of the Rademacher variables, $L \mathbb{E} R_{n}\left\{f: L^{2} \mathbb{E}\left(f-f_{\tilde{\mathcal{H}}}\right)^{2} \leq r\right\} \geq \mathbb{E} R_{n}\left\{l_{f}-l_{f_{\tilde{\mathcal{H}}}}: L^{2} \mathbb{E}\left(f-f_{\tilde{\mathcal{H}}}\right)^{2} \leq r\right\}$. Now by Lemma~\ref{lma:local_rade_tr} and noticing that $\mathbb{E}_{n}\left(l_{\tilde{f}^{\lambda} }-l_{f_{\tilde{\mathcal{H}}}}\right) \leq 0$, we have for any $\delta \in (0,1)$ and any $r \geq \psi(r)$, with probability at least $1-\delta$
\[\mathbb{E}\left(l_{\tilde{f}^{\lambda} }-l_{f_{\tilde{\mathcal{H}}}}\right) \leq  c_1 r+\frac{c_2}{n}\log \frac{1}{\delta}.\]
We now define \[\hat{\psi}(r)=c_{3} \hat{R}_{n}\left\{f \in \tilde{\mathcal{H}}: \mathbb{E}_{n}(f-\hat{f})^{2} \leq c_{4} r\right\}+\frac{c_{5}}{n}.\]
By \citet[Theorem 5.4]{bartlett2005local}, we have $\psi(r) \leq \hat{\psi}(r)$ and $r^* \leq \hat{r}^*$ where $r^*$ and $\hat{r}^*$ are the fixed points of $\psi(r)$ and $\hat{\psi}$ respectively. Therefore, with probability at least $1-2\delta$,
\begin{IEEEeqnarray}{rCl}
\mathbb{E}\left(l_{\tilde{f}^{\lambda} }-l_{f_{\tilde{\mathcal{H}}}}\right) \leq c_6 \hat{r}^{*}+\frac{c_7}{n}\log \frac{1}{\delta} \ . \label{eq:l_f_risk}
\end{IEEEeqnarray}

We are now left to compute the fixed point $\hat{r}^*$. To this end, we notice
\begin{align}
\begin{aligned}
&\hat{R}_n\{ f - f_{\tilde{\mathcal{H}}}, f\in \tilde{\mathcal{H}} \mid \mathbb{E}_n[f-f_{\tilde{\mathcal{H}}}]^2 \leq r \} \leq  &\\
&\hat{R}_n\{ f - g, f, g \in \tilde{\mathcal{H}} \mid \mathbb{E}_n[f-g]^2 \leq r \} =  &\\
&2\hat{R}_n\{ f , f\in \tilde{\mathcal{H}} \mid \mathbb{E}_n[f]^2 \leq r/4 \}  &
\end{aligned}\nonumber
\end{align}
We now let $\tilde{\mu}_1,\dots,\tilde{\mu}_s$ to be the eigenvalues of the normalized Gram-matrix $1/n\tilde{\mathbf{K}}$, by Lemma~\ref{lma:local_kernel}, we have \[2\hat{R}_n\{ f , f\in \tilde{\mathcal{H}} \mid \mathbb{E}_n[f]^2 \leq r/4 \} \leq 2 \left(\frac{2}{n}\sum_{i=1}^n\min\{r/4,\tilde{\mu}_i\}\right)^{1/2}.\]
For sub-root function $\hat{\psi}(r) = 2 \left(\frac{2}{n}\sum_{i=1}^n\min\{r/4,\tilde{\mu}_i\}\right)^{1/2} + \frac{c_5}{n}$, we first notice that adding some constant $c$ will only increase the fixed point by at most $2c$. Therefore, it suffice to compute the fixed point of 
\[\hat{r} \leq  2M \left(\frac{2}{n}\sum_{i=1}^n\min\{\hat{r}/4,\tilde{\mu}_i\}\right)^{1/2}.\]
To this end, we have 

\begin{IEEEeqnarray*}{rCl}
\left(\frac{\hat{r}}{2M}\right)^{2} & \leq& \frac{2}{n} \sum_{i=1}^{n} \min \left\{\frac{\hat{r}}{4}, \tilde{\mu}_{i}\right\} \ , \\
&=&\frac{2}{n} \min _{S \subseteq\{1, \ldots, n\}}\left(\sum_{i \in S} \frac{\hat{r}}{4}+\sum_{i \notin S} \tilde{\mu}_{i}\right)\ , \\
&=&\frac{2}{n} \min _{0 \leq h \leq n}\left(\frac{ h \hat{r}}{4}+\sum_{i>h} \tilde{\mu}_{i}\right) \ .
\end{IEEEeqnarray*}

Finally, we solve the quadratic inequality for each value of $h$ implies
\[\hat{r} \leq c \min _{0 \leq h \leq n}\left(\frac{h}{n}+\sqrt{\frac{1}{n} \sum_{i>h} \tilde{\mu}_{i}}\right).\]


Recall the estimator $\tilde{f}^{\lambda} \in \tilde{\mathcal{H}}$ is obtained as $\tilde{f}^{\lambda}: = \argmin \mathbb{E}_n(l_{\tilde{f}})$ subject to $\|\beta\|_2^2 \leq 2R^2$ and recall $f_{\tilde{\mathcal{H}}} \in \tilde{\mathcal{H}}$.
By Eq.~\ref{eq:l_f_risk}, we have 
\begin{IEEEeqnarray*}{rCl}
\mathbb{E}(l_{\tilde{f}^{\lambda}}-l_{f_{\tilde{\mathcal{H}}}}) &\leq & c_6\hat{r}^* + \frac{c_7}{n}\log \frac{1}{\delta} \ ,
\end{IEEEeqnarray*}
Equivalently, we can rewrite the above equation as
\begin{IEEEeqnarray*}{rCl}
\mathbb{E}(l_{\tilde{f}^{\lambda}})-\mathbb{E}(l_{f_{\mathcal{H}}})&\leq& \mathbb{E}(l_{f_{\tilde{\mathcal{H}}}}) - \mathbb{E}(l_{f_{\mathcal{H}}}) + c_1\hat{r}^* + \frac{c_2}{n}\log \frac{1}{\delta} \ , \\
&\leq & \mathbb{E}(l_{\tilde{f}_{\mathcal{H}}}) - \mathbb{E}(l_{f_{\mathcal{H}}}) + c_1\hat{r}^* + \frac{c_2}{n}\log \frac{1}{\delta} \ , \\
&\leq & M \left\|\tilde{f}_{\mathcal{H}} - f_{\mathcal{H}} \right\|_{L_2(P_x)}+ c_1\hat{r}^* + \frac{c_2}{n}\log \frac{1}{\delta} \ , \\
&\leq & 2MR\lambda^{r} + c_1\hat{r}^* + \frac{c_2}{n}\log \frac{1}{\delta} \ ,
\end{IEEEeqnarray*}
where the second step is because $\mathbb{E}\left( l_{f_{\tilde{\mathcal{H}}}}\right) \leq \mathbb{E}\left(l_{\tilde{f}_{\mathcal{H}}}\right)$. 
\end{proof}

\section{Matrix Bernstein Inequality}
The next lemma is the matrix Bernstein inequality, cited from \citet[Lemma 27]{avron2017random} which is a restatement of Corollary 7.3.3 in \citet{tropp2015introduction} with some fix in the typos.
\begin{lma}\citep[Bernstein inequality,][Corollary 7.3.3]{tropp2015introduction}\label{matx_con}
Let $\mathbf{R}$ be a fixed $d_1 \times d_2$ matrix over the set of complex/real numbers. Suppose that $\{\mathbf{R}_1,\cdots,\mathbf{R}_n\}$ is an independent and identically distributed sample of $d_1 \times d_2$ matrices such that \[\mathbb{E}[\mathbf{R}_i] = \mathbf{R} \qquad \text{and} \qquad \|\mathbf{R}_i\|_2 \leq L\ ,\]
where $L>0$ is a constant independent of the sample.
Furthermore, let $\mathbf{M}_1, \mathbf{M}_2$ be semidefinite upper bounds for the matrix-valued variances 
\begin{align*}
\begin{aligned}
& \mathrm{Var}_1[\mathbf{R}_i] \preceq \mathbb{E}[\mathbf{R}_i\mathbf{R}_i^{T}] \preceq \mathbf{M}_1 & \\
& \mathrm{Var}_2[\mathbf{R}_i] \preceq \mathbb{E}[\mathbf{R}_i^{T}\mathbf{R}_i]\preceq \mathbf{M}_2\ . &
\end{aligned}
\end{align*}
Let $m = \max(\|\mathbf{M}_1\|_2,\|\mathbf{M}_2\|_2)$ and $d =\frac{\text{Tr}(\mathbf{M}_1)+ \text{Tr}(\mathbf{M}_2)}{m}.$ 
Then, for $\epsilon \geq \sqrt{m/n}+2L/3n$, we can bound \[\bar{\mathbf{R}}_n = \frac{1}{n}\sum_{i=1}^{n}\mathbf{R}_i\] around its mean using the concentration inequality \[P(\|\bar{\mathbf{R}}_n - \mathbf{R}\|_2 \geq \epsilon) \leq 4d\exp\Bigg(\frac{-n\epsilon^2/2}{m+2L\epsilon/3}\Bigg)\ .\]
\end{lma}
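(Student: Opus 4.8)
The plan is to recognize the statement as the rectangular, intrinsic-dimension form of the matrix Bernstein inequality and to establish it through the matrix Laplace-transform method. I would first reduce to a sum of independent, mean-zero, rectangular matrices by centering and rescaling: set $\mathbf{S}_i = \tfrac{1}{n}(\mathbf{R}_i - \mathbf{R})$, so that $\mathbb{E}\,\mathbf{S}_i = 0$ and $\bar{\mathbf{R}}_n - \mathbf{R} = \sum_{i=1}^n \mathbf{S}_i$. The hypotheses then rescale cleanly: the per-summand norm obeys $\|\mathbf{S}_i\| \leq 2L/n$ (since $\|\mathbf{R}\| = \|\mathbb{E}\mathbf{R}_i\| \leq L$), and the summed second-moment bounds give $\sum_i \mathbb{E}[\mathbf{S}_i \mathbf{S}_i^\top] \preceq \mathbf{M}_1/n$ and $\sum_i \mathbb{E}[\mathbf{S}_i^\top \mathbf{S}_i] \preceq \mathbf{M}_2/n$. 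To handle the rectangular shape, I would pass to the Hermitian dilation $\mathcal{H}(\mathbf{A})$, the $(d_1+d_2)\times(d_1+d_2)$ self-adjoint matrix with off-diagonal blocks $\mathbf{A}$ and $\mathbf{A}^\top$; the two facts $\|\mathcal{H}(\mathbf{A})\| = \|\mathbf{A}\|$ and $\mathcal{H}(\mathbf{A})^2 = \mathrm{blkdiag}(\mathbf{A}\mathbf{A}^\top,\mathbf{A}^\top\mathbf{A})$ convert the problem to a Hermitian sum whose variance proxy is the block-diagonal matrix $\mathbf{M} = \mathrm{blkdiag}(\mathbf{M}_1,\mathbf{M}_2)$, for which $\|\mathbf{M}\| = m$ and $\mathrm{Tr}(\mathbf{M}) = \mathrm{Tr}(\mathbf{M}_1)+\mathrm{Tr}(\mathbf{M}_2) = m\,d$.

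On the Hermitian side I would run the standard Laplace-transform argument. Markov's inequality applied to $\mathrm{tr}\exp$ gives, for any $\theta>0$, a tail bound for $\lambda_{\max}$ in terms of $\mathbb{E}\,\mathrm{tr}\exp(\theta \sum_i \mathcal{H}(\mathbf{S}_i))$. Lieb's concavity theorem yields the crucial subadditivity of the matrix cumulant generating function, $\mathbb{E}\,\mathrm{tr}\exp(\sum_i \theta \mathbf{X}_i) \leq \mathrm{tr}\exp(\sum_i \log \mathbb{E}\, e^{\theta \mathbf{X}_i})$, which decouples the summands. For each bounded, mean-zero summand I would insert the Bernstein matrix mgf estimate $\log \mathbb{E}\,e^{\theta \mathbf{X}_i} \preceq g(\theta)\,\mathbb{E}[\mathbf{X}_i^2]$ with $g(\theta)=\tfrac{\theta^2/2}{1-\theta L'/3}$ valid for $0<\theta<3/L'$, where $L'=2L/n$ is the centered norm bound. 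Summing the variance terms against $\mathbf{M}/n$ and optimizing over $\theta$ produces the Bernstein exponent $\exp(-(n\epsilon^2/2)/(m+2L\epsilon/3))$ after undoing the $1/n$ scaling.

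The step that actually distinguishes this statement from the elementary matrix Bernstein bound — and the one I expect to be the main obstacle — is replacing the ambient dimension $d_1+d_2$ by the intrinsic dimension $d = \mathrm{Tr}(\mathbf{M})/m$ in the prefactor. The naive trace bound $\mathrm{tr}\exp(\cdot) \le (d_1+d_2)\,\lambda_{\max}\exp(\cdot)$ is too lossy; instead I would invoke Tropp's refined trace-exponential estimate, which controls $\mathbb{E}\,\mathrm{tr}\exp(\theta\mathbf{Z})$ by a quantity proportional to $\mathrm{Tr}(\mathbf{M})$ rather than the rank, at the cost of the universal factor $4$ in the prefactor and the admissibility restriction $\epsilon \geq \sqrt{m/n} + 2L/(3n)$ on the deviation level. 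Finally, since the lemma is stated as a verbatim restatement of Tropp's Corollary~7.3.3, the clean closing step is to verify that the centered and rescaled summands $\mathbf{S}_i$ satisfy exactly the hypotheses of that corollary with variance proxy $\mathbf{M}$, intrinsic dimension $d$, and effective norm $2L/n$, so that the quoted tail bound follows directly; the chain dilation $\to$ Laplace transform $\to$ Lieb subadditivity $\to$ intrinsic-dimension tail supplies the underlying justification.
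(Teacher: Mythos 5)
Your proposal is technically sound, but it is worth pointing out that the paper itself offers no proof of this statement at all: the lemma is imported verbatim (with citation) from \citet[Corollary 7.3.3]{tropp2015introduction}, in the restated form of \citet[Lemma 27]{avron2017random}, and is used purely as a black box inside Lemma~\ref{lma:inte_apprx}. What you have done is reconstruct the proof from Tropp's monograph — Hermitian dilation, matrix Laplace transform via Markov applied to the trace exponential, Lieb's concavity for subadditivity of the matrix cgf, the Bernstein mgf bound $g(\theta)=\tfrac{\theta^2/2}{1-\theta L'/3}$, and the intrinsic-dimension refinement that replaces $d_1+d_2$ by $d=\mathrm{Tr}(\mathbf{M})/m$ at the cost of the factor $4$ and the admissibility restriction $\epsilon \geq \sqrt{m/n}+2L/(3n)$ — which is exactly the route the cited source takes, so there is no methodological divergence, only a difference in how much is made explicit. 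The genuinely useful content you add beyond the citation is the hypothesis-matching arithmetic, and you get it right: centering and rescaling to $\mathbf{S}_i=\tfrac{1}{n}(\mathbf{R}_i-\mathbf{R})$ gives the summand bound $2L/n$ (via Jensen, $\|\mathbf{R}\|\le L$), the variance proxies scale to $\mathbf{M}_1/n$ and $\mathbf{M}_2/n$ (since $\mathrm{Var}_j[\mathbf{R}_i]\preceq\mathbb{E}[\mathbf{R}_i\mathbf{R}_i^{T}]\preceq\mathbf{M}_j$ and the summands are i.i.d.), the block-diagonal proxy preserves both $m$ and the intrinsic dimension $d$, and the exponent $\tfrac{-\epsilon^2/2}{m/n+(2L/n)\epsilon/3}=\tfrac{-n\epsilon^2/2}{m+2L\epsilon/3}$ recovers the stated tail. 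This verification is precisely what a careful author would need to check when citing Tropp's sum-of-centered-matrices formulation in the paper's average-of-uncentered-matrices normalization, so your proposal fills in the one step the paper leaves implicit.
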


\section{Local Rademacher Complexities}
In the refined case, our analysis relies on the local Rademacher complexities technique developed by \citet{bartlett2005local}. As shall be seen later, local Rademacher complexity is often linked with the so-called sub-root function which we define below.

\begin{defn}\label{apn: sub-root}
Let $\varphi: [0,\infty) \rightarrow [0,\infty)$ be a function. Then, $\varphi(r)$ is called a \textit{sub-root} function if, for all $r>0$, $\varphi(r)$ is non-decreasing and $\frac{\varphi(r)}{r}$ is non-increasing.
\end{defn}

A sub-root function has the following property.

\begin{lma}\citep[Lemma 3.2]{bartlett2005local} \label{apn:sub-pro}
If $\varphi(r)$ is a sub-root function, then $\varphi(r) = r$ has a unique positive solution $r^*$. In addition, we have that $r \geq \varphi(r) $ if and only if $r \geq r^*$.
\end{lma}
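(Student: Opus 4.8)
The plan is to reduce the entire statement to a one-dimensional analysis of $g(r) = \varphi(r) - r$ on $(0,\infty)$, exploiting two structural facts: that $\varphi$ is continuous, and that the sub-root scaling $r \mapsto \varphi(r)/\sqrt{r}$ is non-increasing (this is the operative form of the sub-root condition). Existence of a positive root of $g$ will come from the intermediate value theorem, uniqueness from the strict monotonicity of the scaling at a fixed point, and the final equivalence $r \ge \varphi(r) \iff r \ge r^*$ from a direct comparison of $\varphi(r)/\sqrt{r}$ against its value $\sqrt{r^*}$ at the fixed point.

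First I would record continuity of $\varphi$, which the intermediate value theorem requires and which is not assumed directly. Since $\varphi$ is non-decreasing it has one-sided limits satisfying $\varphi(r_0^-) \le \varphi(r_0) \le \varphi(r_0^+)$ at every $r_0 > 0$; dividing by $\sqrt{r_0}$ and invoking that $\varphi(r)/\sqrt{r}$ is non-increasing yields the reversed chain $\varphi(r_0^-)/\sqrt{r_0} \ge \varphi(r_0)/\sqrt{r_0} \ge \varphi(r_0^+)/\sqrt{r_0}$. The two chains force $\varphi(r_0^-) = \varphi(r_0) = \varphi(r_0^+)$, i.e.\ $\varphi$ is continuous on $(0,\infty)$. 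For existence I would then fix any $r_0 > 0$ with $\varphi(r_0) > 0$ (the degenerate case $\varphi \equiv 0$ has no positive fixed point and is excluded) and set $c = \varphi(r_0)/\sqrt{r_0} > 0$. The sub-root property gives $\varphi(r) \ge c\sqrt{r}$ for $r \le r_0$ and $\varphi(r) \le c\sqrt{r}$ for $r \ge r_0$, hence $g(r) \ge \sqrt{r}\,(c - \sqrt{r}) > 0$ for all sufficiently small $r$ and $g(r) \le \sqrt{r}\,(c - \sqrt{r}) < 0$ for all $r > c^2$. Continuity and the intermediate value theorem produce a point $r^* > 0$ with $\varphi(r^*) = r^*$.

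For uniqueness I would note that at any positive solution $r$ of $\varphi(r) = r$ one has $\varphi(r)/\sqrt{r} = \sqrt{r}$; two distinct positive solutions $r_1 < r_2$ would then force $\sqrt{r_1} = \varphi(r_1)/\sqrt{r_1} \ge \varphi(r_2)/\sqrt{r_2} = \sqrt{r_2}$ by the non-increasing scaling, contradicting $r_1 < r_2$. Finally, for the equivalence I would compare against $\varphi(r^*)/\sqrt{r^*} = \sqrt{r^*}$: if $r \ge r^*$ then $\varphi(r)/\sqrt{r} \le \sqrt{r^*} \le \sqrt{r}$, so $\varphi(r) \le r$; conversely if $r < r^*$ then $\varphi(r)/\sqrt{r} \ge \sqrt{r^*} > \sqrt{r}$, so $\varphi(r) > r$. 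Reading these two implications as contrapositives delivers $r \ge \varphi(r) \iff r \ge r^*$.

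The substantive structural lever throughout is that the sub-root scaling is non-increasing: this is exactly what rules out a non-degenerate interval of fixed points and forces uniqueness, and it also drives both directions of the final equivalence. Accordingly, I expect the main obstacle to be not any single deep step but the careful book-keeping at the boundary regimes $r \to 0^+$ and $r \to \infty$ together with establishing continuity, since continuity is the one ingredient that must be derived rather than assumed before the intermediate value argument can be applied.
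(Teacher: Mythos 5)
Your proof is correct. Note, however, that the paper itself contains no proof of this lemma---it is imported verbatim from \citet[Lemma 3.2]{bartlett2005local}---so the relevant comparison is with the original argument, and yours essentially reproduces it: continuity of $\varphi$ on $(0,\infty)$ derived from the two monotonicity properties, existence of a fixed point by the intermediate value theorem, and uniqueness together with the equivalence $r \geq \varphi(r) \iff r \geq r^*$ from the non-increasing scaling $r \mapsto \varphi(r)/\sqrt{r}$.

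Two of your choices deserve emphasis because they silently repair defects in the paper's own statement. First, the paper's appendix definition of a sub-root function requires $\varphi(r)/r$ to be non-increasing; under that literal definition the lemma is false ($\varphi(r) = r$ is admissible and every $r > 0$ is a fixed point, while $\varphi(r) = 2r$ is admissible and has no positive fixed point). You instead worked with the condition of \citet{bartlett2005local}, that $\varphi(r)/\sqrt{r}$ is non-increasing, which is exactly what the lemma needs; the paper's definition is evidently a typo, and your proof correctly identifies the ``operative'' hypothesis. Second, you excluded the case $\varphi \equiv 0$; Bartlett et al.\ impose this as ``nontriviality,'' and existence fails without it, yet the paper's statement omits the hypothesis. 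The only blemish in your write-up is cosmetic: the upper bound $g(r) \leq \sqrt{r}\,(c - \sqrt{r})$ is justified only for $r \geq r_0$, so the negativity claim should be asserted for $r > \max(r_0, c^2)$ rather than for all $r > c^2$; the intermediate value argument is unaffected, since one negative value suffices.
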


After we introduce the Rademacher complexity and the sub-root function, we are now ready to state the theorem on local Rademacher complexity we need in the refined analysis.

\begin{lma}\citep[Theorem 3.3]{bartlett2005local}\label{lma:local_rade_tr}
Let $\mathcal{F}$ be a class of functions with bounded ranges and assume that there are some functional $T: \mathcal{F} \rightarrow \mathbb{R}^{+}$ and some constant $B$ such that for every $f \in \mathcal{F}, \textnormal{Var}[f] \leq T(f) \leq B P f .$ Let $\psi$ be a sub-root function and $r^{*}$ be the fixed point.
Assume that $\psi$ satisfies, for any $r \geq r^{*}$
\[\psi(r) \geq B R_{n}\{f \in \mathcal{F}: T(f) \leq r\}.\]
Then, for any $f \in \mathcal{F}$, some constant $D>1$ and every $\delta \in (0,1)$, with probability at least $1-\delta$
\[\mathbb{E}(f) \leq \frac{D}{D-1} \mathbb{E}_n(f) +\frac{c_{1} D}{B} r^{*}+\frac{c_2}{n} \log \frac{1}{\delta},\] where $c_1,c_2$ are some universal constants.
\end{lma}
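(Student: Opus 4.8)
The plan is to follow the strategy of \citet{bartlett2005local}: pair a sharp concentration inequality for the supremum of the centered empirical process with the sub-root fixed-point machinery of Lemma~\ref{apn:sub-pro}, and then run a \emph{peeling} argument over the functional $T$ to obtain a bound that is uniform over all of $\mathcal{F}$. Throughout I write $Pf = \mathbb{E}(f)$ and $P_n f = \mathbb{E}_n(f)$, and I only need to control the one-sided quantity $Pf - P_n f$.

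\emph{Fixed sublevel set.} First I would fix $r \ge r^*$ and study $\mathcal{F}_r = \{f \in \mathcal{F} : T(f) \le r\}$. On $\mathcal{F}_r$ the hypothesis $\mathrm{Var}(f) \le T(f) \le r$ controls the variance, while the bounded ranges give a uniform bound $b_0$ on $\|f - Pf\|_\infty$. Applying Talagrand's concentration inequality (in Bousquet's form) to $Z = \sup_{f \in \mathcal{F}_r}(Pf - P_n f)$ gives, with probability at least $1 - e^{-x}$,
\[
Z \le \mathbb{E}Z + \sqrt{\frac{2x\left(r + 2 b_0\,\mathbb{E}Z\right)}{n}} + \frac{b_0 x}{3 n}.
\]
By symmetrization and the paper's normalization of $R_n$ (Definition~\ref{def:rade}), $\mathbb{E}Z \le R_n\{f \in \mathcal{F}_r\}$, and the standing assumption gives $R_n\{f\in\mathcal{F}_r\} \le \psi(r)/B$. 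Using the sub-root property together with $\psi(r^*)=r^*$ (Lemma~\ref{apn:sub-pro}), I would invoke $\psi(r) \le \sqrt{r\,r^*}$ for $r \ge r^*$, and after simplifying with $\sqrt{ab} \le \tfrac12(a+b)$ arrive at a clean per-$r$ estimate
\[
Z \;\lesssim\; \frac{1}{B}\sqrt{r\,r^*} + \sqrt{\frac{r x}{n}} + \frac{x}{n}.
\]

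\emph{Peeling.} I expect the crux to be promoting this to a bound valid simultaneously for every $f \in \mathcal{F}$, including those with large $T(f)$. I would fix $\lambda > 1$, partition $\mathcal{F}$ into the base set $\{T(f) \le r^*\}$ and the geometric shells $\mathcal{F}_k = \{\lambda^{k-1} r^* < T(f) \le \lambda^k r^*\}$, apply the per-$r$ estimate on each shell with $r = r_k := \lambda^k r^*$ and confidence parameter $x_k = x + 2\log(k+1)$, and take a union bound (so that $\sum_k e^{-x_k} \le e^{-x}$). On $\mathcal{F}_k$ one has $r_k \le \lambda T(f)$, hence $\sqrt{r_k r^*} \le \sqrt{\lambda}\,\sqrt{T(f) r^*}$ and $\sqrt{r_k x_k/n} \le \sqrt{\lambda}\,\sqrt{T(f) x_k/n}$. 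Feeding in the structural hypothesis $T(f) \le B Pf$ and splitting each product by AM--GM turns these into terms of the form $\varepsilon\, B Pf + (\text{const})\,r^*$ and $\varepsilon' B Pf + (\text{const})\, x_k/n$, so that a small multiple of $Pf$ can be absorbed onto the left-hand side; the geometric growth of $r_k$ dominates the logarithmic growth of $x_k$, so the union bound costs only a constant factor.

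\emph{Rearrangement.} Collecting the shells gives $Pf - P_n f \le \theta\, Pf + \tfrac{c_1}{B} r^* + \tfrac{c_2 x}{n}$ with a coefficient $\theta < 1$ controlled by the free parameters $\lambda, \varepsilon, \varepsilon'$. Solving for $Pf$ and choosing those parameters to realize a prescribed $D = 1/(1-\theta) > 1$, then setting $x = \log(1/\delta)$, yields the claimed
\[
Pf \le \frac{D}{D-1}\, P_n f + \frac{c_1 D}{B}\, r^* + \frac{c_2}{n}\log\frac{1}{\delta}.
\]
The main obstacle is the bookkeeping in this peeling/absorption step: one must keep $\theta$ strictly below $1$ while tracking the interaction of $\lambda$, the AM--GM splits, and the per-shell confidence levels, and must treat the base set $\{T(f)\le r^*\}$ separately, since there the relative bound $r_k \le \lambda T(f)$ is unavailable and it contributes only the additive $r^*/B$ term.
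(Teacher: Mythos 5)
You should first know that the paper itself contains no proof of Lemma~\ref{lma:local_rade_tr}: it is imported verbatim, with citation, as Theorem~3.3 of \citet{bartlett2005local}, and is invoked as a black box inside the proof of Theorem~\ref{theo:fast_rate}. So your attempt can only be compared with the original source, and against that source your route is genuinely different. \citet{bartlett2005local} do not peel. They apply Bousquet's form of Talagrand's inequality \emph{once}, to the rescaled class $\{\,r f/\max(r,T(f)) : f\in\mathcal{F}\,\}$, whose elements all have variance at most $r$ (by homogeneity of the variance); for $T(f)>r$ the resulting uniform bound is multiplied back by $T(f)/r\le BPf/r$, and a single, explicit choice of $r$ (a fixed multiple of $r^*$ plus terms in $x/n$) yields the absorption. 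That normalization argument avoids any union bound and automatically handles an arbitrarily small fixed point, at the price of the star-hull/rescaling bookkeeping; your peeling scheme (Bousquet + symmetrization + $\psi(r)\le\sqrt{r r^*}$ + geometric shells + AM--GM absorption against $T(f)\le BPf$) is more elementary and modular, and its skeleton is sound. One side remark: the step $\psi(r)\le\sqrt{r r^*}$ requires the standard sub-root definition ($\psi(r)/\sqrt{r}$ non-increasing); the paper's Definition~2 states $\varphi(r)/r$ non-increasing, which is evidently a typo (it would not even give a unique fixed point in Lemma~\ref{apn:sub-pro}), so you are right to work with the Bartlett--Bousquet--Mendelson definition.

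There is, however, one concrete step that fails as written: the claim that ``the geometric growth of $r_k$ dominates the logarithmic growth of $x_k$, so the union bound costs only a constant factor.'' This is false when $r^*$ is much smaller than $B/n$, which nothing in the statement excludes. Concretely, take $r^*=B/n^2$ and a function $f$ with $T(f)\approx B/n$; this $f$ lives in shell $k\approx\log_\lambda n$, so its bound carries the additive term $2\log(k+1)/n\approx(\log\log n)/n$. You cannot absorb this into $\theta\,Pf$, because on that shell you can only guarantee $Pf>\lambda^{k-1}r^*/B\approx 1/n$, and you cannot leave it additive, because $(\log\log n)/n$ eventually exceeds $c_2\log(1/\delta)/n$ for any universal $c_2$; the term $c_1Dr^*/B\approx 1/n^2$ does not cover it either. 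So your argument, as stated, proves a weaker bound with an extra $\log\log\bigl(B/(nr^*)\bigr)/n$ term. The standard repair is to floor the peeling: take the base set to be $\{T(f)\le r_0\}$ with $r_0=\max(r^*,B/n)$ and shells $r_k=\lambda^k r_0$. The base set still contributes only additive terms of order $r^*/B+(B+x)/n$, which the target bound tolerates, and on every shell one now has $Pf>\lambda^{k-1}/n$, so $\log(k+1)/n\le\bigl(\log(k+1)/\lambda^{k-1}\bigr)Pf$ is absorbable for $k$ beyond a $\lambda$-dependent threshold and is a constant over $n$ below it. With that modification (and with the understanding that $c_2$ depends on $B$ and the range bound, as it does in the original theorem despite this paper calling it universal), your proof goes through.
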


In addition to the above theorem, one also need to quantitatively characterize the local Rademacher complexity in order to obtain a tight upper bound. Fortunately in the kernel case, we are able to do so as illustrate in the following lemma.
\begin{lma}\citep[Lemma 6.6]{bartlett2005local}\label{lma:local_kernel}
Let $k$ be a positive definite kernel function with reproducing kernel Hilbert space $\mathcal{H}$ and let $\hat{\lambda}_1 \geq \cdots \geq \hat{\lambda}_n$ be the eigenvalues of the normalized Gram-matrix $(1/n)\mathbf{K}$. Then, for all $r >0$ and $f \in \mathcal{H}$, \[\hat{R}_n\{f\in \mathcal{H} \mid \mathbb{E}_n[f^2] \leq r\} \leq \left(\frac{2}{n}\sum_{i=1}^n\min\{r,\hat{\lambda}_i\}\right)^{1/2}.\]
\end{lma}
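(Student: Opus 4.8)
The plan is to recapitulate the standard argument behind \citet[Lemma 6.6]{bartlett2005local}, exploiting the fact that the empirical Rademacher complexity of the unit ball $\{f \in \mathcal{H} : \|f\|_{\mathcal H}\le 1\}$ depends on $f$ only through its values $\mathbf{f} = (f(x_1),\dots,f(x_n))^{\top}$ at the sample. This collapses the infinite-dimensional supremum to a finite-dimensional quadratically constrained optimization that is governed entirely by the Gram matrix $\mathbf{K}$. First I would pass to a feature representation: writing $f = \langle w, \phi(\cdot)\rangle$ for the canonical feature map $\phi(x) = k(x,\cdot)$, one has $f(x_i) = (\Phi w)_i$ and $\|f\|_{\mathcal H}^2 = \|w\|^2$, where $\Phi$ is the feature matrix satisfying $\Phi\Phi^{\top} = \mathbf{K}$. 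The two conditions $\|f\|_{\mathcal H}\le 1$ and $\mathbb{E}_n[f^2] = \tfrac1n\|\Phi w\|^2 \le r$ then become two quadratic constraints on $w$, while the quantity inside the expectation of Definition~\ref{def:rade} is $\tfrac{2}{n}\langle \Phi^{\top}\sigma, w\rangle$.

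Next I would diagonalize through the singular value decomposition of $\Phi$, equivalently the eigendecomposition $(1/n)\mathbf{K} = \sum_{i} \hat\lambda_i\, v_i v_i^{\top}$ with orthonormal $v_i \in \mathbb{R}^n$, writing $\Phi = \sum_i \sqrt{n\hat\lambda_i}\, v_i u_i^{\top}$ for orthonormal right singular vectors $u_i$. Setting $a_i = u_i^{\top} w$ and $b_i = v_i^{\top}\sigma$, the reproducing identities give $\sum_i a_i^2 \le \|w\|^2 \le 1$, $\ \tfrac1n\|\Phi w\|^2 = \sum_i \hat\lambda_i a_i^2 \le r$, and $\langle \Phi^{\top}\sigma, w\rangle = \sum_i \sqrt{n\hat\lambda_i}\, b_i a_i$. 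The problem is thus reduced to bounding $\sup \sum_i \sqrt{n\hat\lambda_i}\, b_i a_i$ over $a$ subject to $\sum_i a_i^2 \le 1$ and $\sum_i \hat\lambda_i a_i^2 \le r$.

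The crux, which I expect to be the main obstacle, is handling the two simultaneous quadratic constraints so as to produce \emph{exactly} the $\min\{r,\hat\lambda_i\}$ structure rather than a loose $\sum_i \hat\lambda_i$ or $r\cdot\mathrm{rank}$ bound. The key device is to fold the two constraints into a single weighted one: for $d_i := \tfrac12(1 + \hat\lambda_i/r)$ one checks $\sum_i d_i a_i^2 = \tfrac12\sum_i a_i^2 + \tfrac{1}{2r}\sum_i \hat\lambda_i a_i^2 \le 1$, so Cauchy--Schwarz yields $\sum_i \sqrt{n\hat\lambda_i}\, b_i a_i \le \left(\sum_i n\hat\lambda_i\, b_i^2 / d_i\right)^{1/2}$. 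Taking the expectation over the Rademacher signs, using $\mathbb{E}[b_i^2] = \|v_i\|^2 = 1$ together with Jensen's inequality, gives $\hat R_n \le \tfrac{2}{\sqrt n}\left(\sum_i \hat\lambda_i/d_i\right)^{1/2}$. The concluding arithmetic is the elementary estimate $\hat\lambda_i/d_i = \tfrac{2r\hat\lambda_i}{r+\hat\lambda_i} \le 2\min\{r,\hat\lambda_i\}$, which delivers the claimed bound $\left(\tfrac{2}{n}\sum_i \min\{r,\hat\lambda_i\}\right)^{1/2}$.

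Finally I would verify a few routine points: that components of $w$ orthogonal to the data span may be discarded without decreasing the objective or violating feasibility, so the reduction to $\mathbf{f}$ is lossless; that symmetry of the unit ball lets the absolute value in Definition~\ref{def:rade} be dropped; and that the passage from the empirical complexity to its expectation is immediate since the estimate holds pointwise in the $x_i$. The only place requiring care is reconciling the leading constant, since the $2/n$ normalization in Definition~\ref{def:rade} differs from the $1/n$ convention of \citet{bartlett2005local}; tracking this factor is what makes the displayed constant come out as stated.
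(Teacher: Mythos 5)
The paper does not actually prove this lemma---it is imported verbatim, with citation, from \citet[Lemma 6.6]{bartlett2005local}---so there is no internal proof to compare against; your reconstruction is essentially the standard argument behind that result (pass to the Gram-matrix ellipsoid, merge the two quadratic constraints with a weighted Cauchy--Schwarz, finish with Jensen over the signs), and its structural steps are all sound. Two details you handled correctly deserve note: you reinstated the unit-ball constraint $\|f\|_{\mathcal{H}}\le 1$, which the paper's wording of the lemma silently omits even though the statement is false without it (with only $\mathbb{E}_n[f^2]\le r$ the achievable value vectors fill the whole radius-$\sqrt{nr}$ ball of the column space of $\mathbf{K}$, so no bound reflecting eigenvalue decay can hold), and you observed that symmetry of the class disposes of the absolute value in Definition~\ref{def:rade}.

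The one genuine defect is your closing sentence, where the constant bookkeeping is backwards. Under the paper's Definition~\ref{def:rade}, which carries the factor $2/n$, your own chain of inequalities gives
\[
\hat{R}_n \;\le\; \frac{2}{\sqrt{n}}\left(\sum_i \hat\lambda_i/d_i\right)^{1/2} \;\le\; \frac{2}{\sqrt{n}}\left(2\sum_i \min\{r,\hat\lambda_i\}\right)^{1/2} \;=\; 2\left(\frac{2}{n}\sum_i \min\{r,\hat\lambda_i\}\right)^{1/2},
\]
i.e.\ twice the displayed bound, and the extra factor is not an artifact of your proof: for $n=1$ and $k(x_1,x_1)=1$ one computes $\hat{R}_n = 2\min\{1,\sqrt{r}\}$ under Definition~\ref{def:rade}, which exceeds the claimed $\sqrt{2}\min\{1,\sqrt{r}\}$. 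The displayed constant is the correct one for the $1/n$ normalization used by \citet{bartlett2005local}; under the paper's own factor-$2/n$ convention the right-hand side must carry an additional factor of $2$. So tracking the convention mismatch does not make the constant ``come out as stated''---it reveals that the lemma as printed is inconsistent with Definition~\ref{def:rade} (harmlessly so for the paper, since the lemma is only invoked in the proof of Theorem~\ref{theo:fast_rate} up to unspecified universal constants, but your write-up should draw that conclusion explicitly rather than assert the stated constant).
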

\end{document}